\newtheorem{theorem}{Theorem}[section]
\newtheorem{proposition}[theorem]{Proposition}
\newtheorem{lemma}[theorem]{Lemma}
\newtheorem{fact}[theorem]{Fact}
\newcommand{\reals}{\mathbb{R}}
\newcommand{\E}{\mathbb{E}}
\newcommand{\argmin}[1]{\underset{#1}{\mathrm{argmin}}}
\newcommand{\relu}[1]{\left[ #1 \right]_+}
\newcommand{\set}[1]{\left\{#1\right\}}
\newcommand{\ba}{\mathbf{a}}
\newcommand{\bx}{\mathbf{x}}
\newcommand{\bw}{\mathbf{w}}
\newcommand{\bb}{\mathbf{b}}
\newcommand{\bz}{\mathbf{z}}
\newcommand{\bc}{\mathbf{c}}
\newcommand{\by}{\mathbf{y}}
\newcommand{\Lcal}{\mathcal{L}}
\newcommand{\Ocal}{\mathcal{O}}
\newcommand{\Acal}{\mathcal{A}}
\newcommand{\Bcal}{\mathcal{B}}
\newcommand{\Ccal}{\mathcal{C}}
\newcommand{\Xcal}{\mathcal{X}}
\newcommand{\Dcal}{\mathcal{D}}
\newcommand{\Fcal}{\mathcal{F}}
\newcommand{\Rcal}{\mathcal{R}}
\newcommand{\Ncal}{\mathcal{N}}
\newcommand{\Scal}{\mathcal{S}}
\newcommand{\Ucal}{\mathcal{U}}
\newcommand{\norm}[1]{\|#1\|}
\newcommand{\inner}[1]{\langle#1\rangle}
\newcommand{\p}[1]{\left(#1\right)}
\newcommand{\pcc}[1]{\left[#1\right]}
\newcommand{\abs}[1]{\left|#1\right|}
\newcommand{\pr}{\operatorname*{\mathbb{P}}}
\newcommand{\Exp}{\operatorname*{\mathbb{E}}}
\newtheorem{example}[theorem]{Example}
\newtheorem{assumption}{Assumption}
\newcommand{\secref}[1]{Sec.~\ref{#1}}
\newcommand{\subsecref}[1]{Subsection~\ref{#1}}
\newcommand{\figref}[1]{Fig.~\ref{#1}}
\renewcommand{\eqref}[1]{Eq.~(\ref{#1})}
\newcommand{\lemref}[1]{Lemma~\ref{#1}}
\newcommand{\thmref}[1]{Thm.~\ref{#1}}
\newcommand{\propref}[1]{Proposition~\ref{#1}}
\newcommand{\appref}[1]{Appendix~\ref{#1}}
\newcommand{\asmref}[1]{Assumption~\ref{#1}}
\newcommand{\itemref}[1]{Item~\ref{#1}}
\DeclareMathOperator{\ip}{IP}
\DeclareMathOperator{\AND}{AND}
\DeclareMathOperator{\round}{round}
\DeclareMathOperator{\poly}{poly}
\DeclareMathOperator{\thresh}{thresh}
\newif\ifanonymize
\def\moverlay{\mathpalette\mov@rlay}
\def\mov@rlay#1#2{\leavevmode\vtop{%
   \baselineskip\z@skip \lineskiplimit-\maxdimen
   \ialign{\hfil$\m@th#1##$\hfil\cr#2\crcr}}}
\newcommand{\charfusion}[3][\mathord]{
    #1{\ifx#1\mathop\vphantom{#2}\fi
        \mathpalette\mov@rlay{#2\cr#3}
      }
    \ifx#1\mathop\expandafter\displaylimits\fi}
\newcommand{\printfnsymbol}[1]{%
  \textsuperscript{\@fnsymbol{#1}}%
}
\title{Depth Separations in Neural Networks:\\ Separating the Dimension from the Accuracy}
    \author{\textbf{Anonymous Author(s)}}
    \author[1,2]{Itay Safran}
    \author[3]{Daniel Reichman}
    \author[2]{Paul Valiant}
    \affil[1]{Ben-Gurion University}
    \affil[2]{Purdue University}
    \affil[3]{Worcester Polytechnic Institute}
\date{}
\begin{document}

\maketitle

\begin{abstract}
    We prove an exponential size separation between depth 2 and depth 3 neural networks (with real inputs), when approximating a $\mathcal{O}(1)$-Lipschitz target function to constant accuracy, with respect to a distribution with support in the unit ball, under the mild assumption that the weights of the depth 2 network are exponentially bounded. This resolves an open problem posed in \citet{safran2019depth}, and proves that the curse of dimensionality manifests itself in depth 2 approximation, even in cases where the target function can be represented efficiently using a depth 3 network. Previously, lower bounds that were used to separate depth 2 from depth 3 networks required that at least one of the Lipschitz constant, target accuracy or (some measure of) the size of the domain of approximation scale \emph{polynomially} with the input dimension, whereas in our result these parameters are fixed to be \emph{constants} independent of the input dimension: our parameters are simultaneously optimal. Our lower bound holds for a wide variety of activation functions, and is based on a novel application of a worst- to average-case random self-reducibility argument, allowing us to leverage depth 2 threshold circuits lower bounds in a new domain. 
\end{abstract}

\section{Introduction}

There is significant empirical evidence suggesting that depth plays a crucial role in the practical success of deep learning~\citep{lecun2015deep,he2016deep}. From a purely theoretical perspective, while depth 2 neural networks are known to be universal approximators for continuous functions over compact domains~\citep{cybenko1989approximation,hornik1989multilayer,leshno1993multilayer}, it is now well-established that greater depth may be necessary to obtain a compact representation of certain target functions \citep{eldan2016power,telgarsky2016benefits,poggio2017and,daniely2017depth,yarotsky2017error,liang2016deep,safran2017depth,safran2019depth,venturi2022depth,safran2022optimization,nichani2023provable,safran2024many}. Perhaps the most stark such examples are when separating depth 2 from depth 3 -- following the seminal work of~\citet{eldan2016power}, many works have demonstrated \emph{depth separations} where depth can be exponentially more beneficial than width, even when increased by just one: There exists a function $f:\reals^d\to\reals$ that can be approximated to accuracy $\varepsilon>0$ using a network of depth 3 and width $\poly(d,1/\varepsilon)$, yet any depth 2 network which approximates $f$ to the same accuracy would require exponentially many more neurons (in $d$ or $1/\varepsilon$).

Despite the growing number of situations where such separations can be shown, known results often involve contrived settings with complicated distributions and oscillatory target functions. On the other hand, functions arising in machine learning settings are often smooth, with bounded fluctuations. Moreover, it was recently shown that in some examples constructed to show depth separations with neural networks, the same property which prevents approximation using the shallow architecture, may also prove detrimental for \emph{learning} the target function, even when using a network which is sufficiently deep to express the target function efficiently \citep{malach2019deeper,malach2021connection}. In light of this, several recent works have shifted their focus to devising depth separation results for functions that are arguably more natural \citep{safran2017depth,safran2019depth,safran2022optimization,ren2023depth,nichani2023provable,safran2024many}. 
The rationale is that such results could be better aligned with learning problems arising in applications, rather than contrived examples that may not be representative of broader classes of functions.

In~\citet{safran2019depth}, the authors observe: given a width 
lower bound for a depth two neural network approximating a target function $f$, that we can trade off the Lipschitz constant of $f$ and the target accuracy parameters  by rescaling $f$ by a multiplicative factor. Moreover, one can also strengthen the accuracy lower bound, at the cost of dilating the domain of approximation, by using a change of variables (see \citep[Theorem~9]{safran2019depth} for a precise statement). This observation suggests, that if either one of these parameters scales with the input dimension $d$ (as is the case in all known separation results), then it is not possible to pinpoint whether the true cause of the difficulty in the approximation stems from the input dimension itself, or from the remaining parameters that were forced to scale with it. In other words, it is not clear if known approximation lower bounds for depth 2 networks are a manifestation of the curse of dimensionality, or if the difficulty lies somewhere else. To study the root cause for the 
inability of polynomially sized depth 2 networks to approximate certain functions, the authors pose the following question:

\begin{quote}
	\emph{Can we show a superpolynomial depth 2 vs.\ depth 3 neural network separation result in terms of the dimension $d$, for approximating $\Ocal(1)$-Lipschitz functions up to constant accuracy $\varepsilon$, on a domain of bounded radius (all independent of $d$)?} \citep{safran2019depth} 
\end{quote}

Their main result, is that in contrast to what previous separation results suggested, if one considers \textit{radial} target functions that are commonly used to show such depth separations, then an exponential dependence on $d$ is not possible. This is demonstrated by providing a general approximation result wherein width $\poly(d)$ suffices.

Motivated by the above question, in this work, we prove a separation result between depth 2 and depth 3, where the target function is $\Ocal(1)$-Lipschitz, the domain of approximation is contained inside the unit ball, and the separation is exponential even if the target accuracy is an absolute constant. (See Table~\ref{tbl} for a comparison). Stated somewhat informally, the following is our main result in this paper:

\begin{theorem}[Informal version of \thmref{thm:main}]\label{thm:informal}
    There exists a sequence of distributions $\left\{\Dcal_{4d}\right\}_{d=1}^{\infty}$ supported in the $4d$-dimensional Euclidean ball, and a sequence of $\Ocal(1)$-Lipschitz functions $f_d:\reals^{4d}\to\reals$, such that, using any $\Ocal(1)$-Lipschitz or threshold activation function $\sigma$ for the neural networks, we have the following:
    \begin{itemize}
    \item $f_d$ is approximable to arbitrary accuracy $\varepsilon$ (in the $L_\infty$ sense) by a depth 3 neural network $\Ncal'_d$ with size and weights bounded by $poly(d,1/\varepsilon)$.
    \item $f_d$ is inapproximable to accuracy better than $0.05$ in the $L_2$ sense by any depth 2 neural network $\Ncal_d$, unless $\Ncal_d$ has either width or weights 
    exponential in the dimension $d$.
	\end{itemize}
\end{theorem}

\captionsetup{font={footnotesize, sf}, labelfont=bf}
\begin{table}
    \caption{\textit{Several known lower bounds for approximation using depth 2 neural networks, where the target function is scaled to be $\Ocal(1)$-Lipschitz, and for the sake of comparison, the distribution is scaled to either contain the unit ball (for compactly supported distributions), or normalized to have most of its probability mass contained within the unit ball (non-compactly supported distributions). An asterisk $^*$ denotes that the result in \citet{daniely2017depth} is not for a radial function, but can be reduced to one taking the form $\bx\mapsto\frac{1}{2\pi d^{3}}\sin(2\pi d^{3}\norm{\bx}^2)$, which oscillates in any direction from the origin (see \citep{safran2019depth,vardi2020neural}). In contrast, our result is non-oscillatory in all such directions, but it can be seen rather as an embedding of a Boolean function into the unit ball. \citet{safran2019depth} use a reduction to the main result of \citet{eldan2016power} to show a lower bound for a non-oscillatory function, but the cost of removing this oscillation is that the accuracy gets worse by a factor of $d^{-2}$. To the best of our knowledge, our lower bound, which is highlighted in bold, is the only constant accuracy lower bound in this setting. Since in many machine learning applications the input dimension is quite large, we have that lower bounds that scale as $1/\poly(d)$ are too permissive, in contrast to our constant lower bound.}}
	\label{tbl}
	\centering
	\begin{tabular}{c|c|c|c}
		\toprule
		Reference & Distribution & Function & Accuracy\\
		\midrule
		\citet{eldan2016power} & Radial, heavy-tailed & Radial, oscillatory & $\Omega\p{d^{-5}}$\\
		\citet{daniely2017depth} & Radial$^*$, compactly supported & Radial$^*$, oscillatory & $\Omega\p{d^{-3}}$ \\
		\citet{safran2019depth} & Radial, heavy-tailed, & Radial, non-oscillatory & $\Omega\p{d^{-7}}$\\
		\citet{venturi2022depth} & Product, heavy-tailed & Product, oscillatory & $\Omega\p{d^{-5}}$\\
		\textbf{This paper} & \textbf{Compactly supported} & \textbf{Non-oscillatory} & $\boldsymbol{\Omega\p{1}}$ \\
		\bottomrule
	\end{tabular}
\end{table}

This result resolves the main open question of~\citet{safran2019depth}, showing that depth 3 neural networks can be  essentially exponentially more efficient than depth 2 neural networks, even for the mildest and most representative setting: where the target function $f$ is $O(1)$-Lipschitz, on a domain of radius 1, and for any commonly used activation function.


Our proof technique is quite different than those commonly used in the literature. While it is common to use some functional analysis tool such as Fourier spectrum analysis \citep{eldan2016power} or spherical harmonics \citep{daniely2017depth}, our lower bound relies on a reduction to threshold circuits (with Boolean inputs), where an exact computation lower bound for the IP mod 2 function is used~\cite{hajnal1993threshold}. The key ingredient in our proof is a reduction from a worst-case complexity, where we are able to construct a network which is capable of classifying \emph{all} the inputs correctly; to an average-case complexity, where intuitively, only a constant portion of the inputs are classified correctly. This construction utilizes the randomization of the input in a manner which preserves its output value, yet induces sufficient randomness so as to result in a concentration of measure which provides a worst-case guarantee. This is achieved by a careful analysis of properties of the probability distribution induced by our randomized self-reduction, and much of the proof of the lower bound is dedicated to this part.

We stress that lower bounds for approximate computation of continuous functions with real inputs, using depth 2 neural networks, do not follow in any straightforward way from lower bounds for depth 2 threshold circuits that \emph{exactly} compute a Boolean function over the Boolean hypercube. 
Indeed it is explicitly stated in \citet{eldan2016power} that 
``lower bounds for Boolean circuits do not readily translate to neural networks of the type used in practice, which are real-valued and express continuous
functions".
Furthermore, our paper focuses on domains of constant diameter such as the unit ball, and this restriction rules out various techniques/constructions that are available for domains of non-constant radius, including the unit hypercube $[0,1]^d$.

We point out that our separation holds for a wide family of activation functions that includes commonly used functions like the ReLU, threshold and sigmoidal activations. Similarly to other lower bounds in the literature that are shown with respect to compactly supported distributions (see related work section below), our result assumes a mild exponential upper bound on the magnitude of the weights of the approximating network. In contrast, lower bounds that do not impose any restrictions on the magnitude of the weights, rely on the technique in \citet{eldan2016power}, and use heavy-tailed data distributions. To the best of our knowledge, it remains a major open problem to show a superpolynomial separation result between depth 2 and depth 3, with respect to a distribution with bounded support, and when allowing unbounded weights.

The remainder of this paper is structured as follows: After presenting our contributions in this paper in more detail below, we discuss related work in the literature. In \secref{sec:settings_main} we present the notation used throughout this paper, followed by a formal construction of our separation setting, the set of assumptions used in our results, and the main result in this paper. \secref{sec:lb} details our depth 2 lower bound, as well as provides a sketch of the proof. \secref{sec:ub} details our depth 3 positive approximation result, as well as provides a concrete example of the construction for the ReLU activation function.

\subsection*{Our contributions}

\begin{itemize}
	\item 
	We prove that under mild assumptions on the activation function $\sigma$ (\asmref{asm:lb}), a depth 2 neural network with $\sigma$ activations cannot approximate a certain $\Ocal(1)$-Lipschitz function $f_d$ well (see \eqref{eq:f_d}). It is shown that there exists a simple distribution supported on the Euclidean unit ball, such that $f_d$ cannot be approximated to better than constant accuracy with respect to this distribution, unless the width of the network or the coefficient sizes scale as $\exp\p{\Omega(d)}$ 
  (\thmref{thm:lb}).
	
    \item 
    As an intermediate step in the derivation of our previous result, using an embedding technique, we show a general reduction from Boolean functions that are hard to approximate on the Boolean hypercube, to a function which is hard to approximate on the unit ball (\thmref{thm:ball_reduction}). This result relies on a Rademacher complexity argument to show an approximation lower bound, and may be of independent interest.
 
    \item 
	We show that under related mild assumptions on the activation function $\sigma$ (\asmref{asm:ub}), a depth 3 neural network with $\sigma$ activations can approximate the same function $f_d$ to arbitrary accuracy in an $L_{\infty}$ sense, using width that scales polynomially with the target accuracy and $d$ (\thmref{thm:ub}).
	
    
    \item
    Combining our lower and upper bounds, we obtain our main result (\thmref{thm:main}), which demonstrates the manifestation of the curse of dimensionality in depth 2 approximation, even if the target function is easy to approximate using depth 3. This is in contrast with previously known results that only imply an exponential lower bound in a high-accuracy regime where the target accuracy scales as $1/\poly(d)$.
	
\end{itemize}

\subsection*{Related work}

\paragraph{Separating depth 2 from depth 3, continuous, $L_2$ lower bound setting.}
For concreteness, let us focus here on the setting of separating depth 2 from depth 3 in a continuous, $L_2$ lower bound setting. The seminal work of \citet{eldan2016power} provided the first exponential $L_2$ lower bound for depth 2 neural networks. The authors construct an approximately radial, oscillatory target function, and use a Fourier spectrum analysis argument to prove their unconstrained lower bound (imposing no bounds on the magnitude of the weights). As an artifact of this technique, a superposition argument is required to guarantee that the target function cannot be approximated with fewer than exponentially many neurons. This, however, also forces the Lipschitz constant and the number of oscillations to scale as $\poly(d)$. \citet{venturi2022depth} adapt the technique of Fourier spectrum analysis to a setting with product target functions and distributions, rather than radial ones. Due to this technique, both works \citep{eldan2016power,venturi2022depth} require the target accuracy to scale as $\Omega\p{d^{-5}}$ for the curse of dimensionality to manifest. \citet{safran2024many} also use Fourier spectrum analysis to show a depth separation result between depth 2 and depth 3, for approximating the non-oscillatory $1$-Lipschitz maximum function on the domain $[0,1]^d$, with respect to the uniform distribution. However, their separation is only polynomial (showing that the size of a depth 2 network approximating the target function is polynomially larger than a depth 3 network approximating the function), and for accuracy which scales as $1/\poly(d)$ rather than a constant.

\citet{daniely2017depth} used a simple and elegant harmonic analysis technique, to show a depth 2 vs.\ depth 3 separation, assuming the weights of the depth 2 network are at most exponential in the input dimension, for oscillatory target functions, over a product distribution on two spheres (that can be reduced to a radial distribution---see \citep{safran2019depth,vardi2020neural} for more details). With some careful analysis, it can be shown that with this technique, the approximation error lower bound on a $\Ocal(1)$-Lipschitz function in a domain of constant radius is $\Omega\p{d^{-3}}$. The works~\citep{safran2022optimization,nichani2023provable} prove depth 2 approximation lower bounds for non-oscillatory target functions, by using the main result in \citet{daniely2017depth}, and decoupling the dependence of the input dimension from the accuracy in the lower bound. However, both results require the accuracy to scale as $d^{-3}$ for the curse of dimensionality to manifest, a property inherited by the proof technique being used.

\citet{safran2017depth} and \citet{safran2019depth} use reductions to the main result of \citet{eldan2016power} to derive exponential depth 2 lower bounds for non-oscillatory functions. \citet{safran2017depth} show this for non-Lipschitz ellipsoid indicator functions, and \citet{safran2019depth} show this for a $1$-Lipschitz radial function. Like the previous results using Fourier spectrum analysis, these results require accuracy scaling as $\Omega\p{d^{-5}}$ to have exponential dependence on the input dimension, but additionally, simplifying the function to be non-oscillatory results in an additional $d^{-2}$ factor, for an overall accuracy lower bound of $\Omega\p{d^{-7}}$. As pointed out above, rescaling these results lets us interpret this as a constant approximation accuracy result, but for functions that are $d^7$-Lipschitz. In contrast, our exponential lower bound is simultaneously for a non-oscillatory $O(1)$-Lipschitz function, for constant approximation accuracy.

\paragraph{Connection between neural networks and threshold circuits.} The connection between approximation lower bounds when using neural networks and threshold circuits has been studied in multiple works recently. \citet{martens2013representational} study lower bounds for depth 2 neural networks with non-threshold activation functions, using known threshold circuit lower bounds on the inner product function. Their work differs from ours in a few ways: (i) theirs is a much more direct application of circuit lower bounds to bound \emph{exact} computation of Boolean functions by neural networks, while we show a random self-reduction to bound the hardness of \emph{average-case} approximation; and (ii) we focus on approximation on continuous domains rather than the discrete uniform distribution over the Boolean hypercube; (iii) their setting of ``restricted Boltzmann machines'' is different from our neural network setting. \citet{mukherjee2017lower} derive sub-linear size lower bounds for neural networks by showing reductions to known threshold circuit lower bounds. \citet{vardi2020neural} show barriers for achieving depth separations in neural networks by using reductions to open problems in threshold circuits, and applying known ``natural proof barrier'' results, showing that such separations would solve long-standing open problem that are widely believed to be very difficult. Since their results only apply to networks of depth $4$ or more, they do not apply in the setting investigated in this paper, which to the best of our knowledge does not solve any open problem in circuit complexity. \citet{vardi2021size} use communication complexity to derive size lower bounds for ReLU networks which compute IP mod 2. While this lower bound applies regardless of the depth of the circuit (similarly to the linear lower bound for IP mod 2 for threshold circuits), it is for a network size which is linear (up to logarithmic factors) in the input dimension, whereas we use a similar distribution, but in order to show an exponential separation between depth 2 and depth 3.

\paragraph{Progress on the open question posed in \citet{safran2019depth}.}

To the best of our knowledge, the only works that made progress with the open question posed in \citet{safran2019depth}, are \citep{safran2019depth,hsu2021approximation}. \citet{safran2019depth} observe that the target accuracy, Lipschitz parameter of the target function, and the radius of the domain of approximation, can all be traded off with polynomial factors. As one of their main contributions, the authors show that when all three parameters are held constant, then an exponential separation between depth 2 and depth 3 is not possible for radial functions, by proving a positive $L_{\infty}$ approximation result in this setting where width polynomial in $d$ suffices. This indicates that in order to resolve the question, one must consider non-radial target functions. \citet{hsu2021approximation} consider the question of how many randomly initialized ReLU neurons are required for the approximation of arbitrary functions with a constant Lipschitz parameter, with respect to the \emph{uniform} distribution over $[0,1]^d$, and with high probability. Their main positive approximation result is that perhaps surprisingly, if the target accuracy is fixed, then a depth 2, width $\poly(d)$ random ReLU network will approximate any $\Ocal(1)$-Lipschitz function with high probability (hence -- there exists a deterministic network with this approximation). However, since this result is with respect to a uniform $L_2$ approximation rather than an $L_{\infty}$ approximation, it does not rule out results like ours that are lower bounds over \emph{nonuniform} distributions. \citet{hsu2021approximation} does strongly suggest, however, that results like ours can only be possible for distributions that are far in some sense from the uniform distribution. 
Indeed, our lower bound is for a distribution with support in the unit ball that has most of its probability mass concentrated close to a discrete set of exponentially many points.

\section{Setting and Main Result}\label{sec:settings_main}

In this section, we formally define our setting and present the notation and terminology used throughout the paper, before turning to present our assumptions and main result.

\subsection{Preliminaries and notation}

\paragraph{Notation and terminology.}
We let $[n]$ be shorthand for the set $\{1,\ldots,n\}$. We denote vectors using bold-faced letters (e.g.\ $\bx$) and matrices or random variables using upper-case letters (e.g.\ $X$). Given a vector $\bx=(x_1,\ldots,x_d)\in\reals^d$, we define $\round(\bx)=(\round(x_1),\ldots,\round(x_d))$, where $\round(x)$ rounds $x$ to the nearest integer. We let $\Ucal(A)$ denote the uniform distribution on a set $A\subseteq\reals^d$. We define the Boolean functions $\AND:\{0,1\}^2\to\{0,1\}$, $\AND(x,y)=x\cdot y$; and $\ip_d:\{0,1\}^{2d}\to\{0,1\}$, $\ip_d(\bx,\by)=\inner{\bx,\by}\mod2$. For some vector $\bx_0\in\reals^d$ and real $r>0$, we define the $d$-dimensional ball with radius $r$ centered at $\bx_0$ as $B_{r}^{d}(\bx_0)\coloneqq\{\bx\in\reals^d:\norm{\bx-\bx_0}\le r\}$. For two sets $A,B\subseteq\reals^d$, their \emph{Minkowski sum} is defined as $A+B\coloneqq\{\ba+\bb:\ba\in A,\bb\in B\}$. 

\paragraph{Neural networks and threshold circuits.}

We consider fully connected, feed-forward neural networks, computing functions from $\reals^d$ to $\reals$. A $\sigma$-network consists of layers of neurons. In every layer except for the output neuron, an affine function of the inputs is computed, followed by a computation of the non-linear activation function $\sigma:\reals\to\reals$. The single output neuron simply computes an affine transformation of its inputs. Each layer with a non-linear activation is called a \emph{hidden layer}, and the \emph{depth} of a network is defined as the number of hidden layers plus one. The \emph{width} of a network is defined as the number of neurons in the largest hidden layer, and the \emph{size} of the network is the total number of neurons across all layers. Analogously, we define a \emph{threshold network} as a $\sigma$-network which employs the threshold activation function; namely, where $\sigma(x)=1$ for all $x\ge0.5$, and $\sigma(x)=0$ otherwise. We make the distinction between a threshold network and a \emph{threshold circuit}, where in a threshold circuit, the output neuron also has a non-linear activation rather than a linear activation as is the case for threshold networks.

\subsection{Formal construction}

We begin with defining the distribution used to show our separation result. In short, this distribution is the uniform distribution over a Cartesian product of two sets. The first is a set which is sufficiently spread out so as to guarantee a constant distance between different components of the distribution (which implies that our function is $\Ocal(1)$-Lipschitz). The second can be seen as a continuous embedding of the IP mod 2 function inside the Euclidean unit ball. More formally, let $\bz_1,\ldots,\bz_{2^{2d}}\in B_{0.8}^{2d}(\mathbf{0})$ be a set of points where each pair is at a distance of at least $0.4$ apart (see the full proof for details about its existence), and let $\bx_1,\ldots,\bx_{2^{2d}}$ be an enumeration of the $2d$-dimensional Boolean hypercube. We define the sets
\[
    \Scal_{4d}\coloneqq \set{\p{\bz_i,\frac{1}{4\sqrt{d}}\bx_i}}_{i=1}^{2^{2d}}~ \text{ and } ~\Ccal_{4d}\coloneqq\pcc{0,\frac{1}{12\sqrt{d}}}^{4d}.
\]
Using these sets, we can define the support of our distribution as $\Acal_{4d}\coloneqq\Scal_{4d}+\Ccal_{4d}$, and our distribution to be the uniform distribution over the set $\Acal_{4d}$; namely, the distribution $\Ucal(\Acal_{4d})$. It is interesting to note that the last $2d$ coordinates of this distribution can also be seen as a scaled interpolation between the two distributions $\Ucal([0,1]^d)$ and $\Ucal(\{0,1\}^d)$, where as discussed in the related work subsection, the former is too spread to show constant accuracy lower bounds for $\Ocal(1)$-Lipschitz functions, and the latter is a discrete rather than a continuous distribution, which is the setting where neural networks are typically being used. Given an input $(\bz,\bx,\by)\in\reals^{2d}\times\reals^{d}\times\reals^{d}$, we define our hard to approximate function $f_{d}:B_1^{4d}({\mathbf{0}})\to\reals$ as 
\begin{equation}\label{eq:f_d}
	f_{d}(\bz,\bx,\by)\coloneqq\ip_{d}\p{\round\p{3\sqrt{d}\bx},\round\p{3\sqrt{d}\by}}.
\end{equation}

In words, our function ignores the first $2d$ coordinates, scales the remaining $2d$ coordinates to the unit hypercube, rounds them to the nearest integer, and computes the IP mod 2 function on the resulting input. It is easy to verify that $\Acal_{4d}\subseteq B_{1}^{4d}(\mathbf{0})$ and that $f_d$ is constant on each connected component of $\Acal_{4d}$. Since every pair of such components is at a constant distance apart, we also have that $f_d$ is $\Ocal(1)$-Lipschitz. (See \appref{app:lb_proof} for further detail.) We note that Kirszbraun's theorem implies that $f_d$ can be easily extended to the entire Euclidean ball, with the same $\Ocal(1)$ Lipschitz constant.


\subsection{Assumptions}

Before we can present our main result in this paper, we will first formally state and discuss our assumptions. We begin with formally stating our assumption on the family of activation functions for which our lower bound holds.

\begin{assumption}[Lower bound]\label{asm:lb}
    The activation function $\sigma$ is either the threshold activation or a $\Ocal(1)$-Lipschitz function.
\end{assumption}
Our above assumption is very mild, and it is satisfied by many commonly used activations such as ReLU, threshold and sigmoidal activations. We remark that our result holds in fact for an even wider class of activation functions, and that for the sake of simplicity, we chose to present it here using a function class that is more intuitive to comprehend, yet still encompasses essentially all activation functions that are interesting from a practical perspective. The interested reader is referred to \asmref{asm:lb_relaxed} in the appendix for the most general form of our lower bound assumption.

Having discussed our lower bound assumption, we now move on to state and discuss our upper bound assumption.

\begin{assumption}[Upper bound]\label{asm:ub}
	There exists a constant $c_{\sigma}$ which depends solely on $\sigma$ such that the following holds: For all $R>0$ and any $L$-Lipschitz function $f:[-R,R]\to\reals$, and for any $\delta$, there exist scalars $a,\set{\alpha_i,\beta_i,\gamma_i}_{i=1}^w$, where $w,|a|,|\alpha_i|,|\beta_i|,|\gamma_i|\le c_{\sigma}\frac{RL}{\delta}$ for all $i\in[w]$, such that the function
	\[
	h(x)=a+\sum_{i=1}^w\alpha_i\cdot\sigma(\beta_ix-\gamma_i)
	\]
	satisfies
	\[
	\sup_{x\in[-R,R]}\abs{f(x)-h(x)}\le\delta.
	\]
	
\end{assumption}
The above is a slight modification of Assumption 2 in \citet{eldan2016power}, and is satisfied by many standard activation functions that are used in the literature, which in particular include threshold, ReLU and sigmoidal activations (see \lemref{lem:thresh_approx} in the appendix which implies that the threshold activation satisfies this property, and see Appendix A in \citet{eldan2016power} for a proof for the ReLU activation). We point out that we also require an additional mild requirement in our assumption, that the weights of the approximating network $h$ are bounded in magnitude. This is in order to control the magnitude of the weights in our approximation of $f_d$, and get a valid separation that requires weights of polynomials magnitude, which stands in contrast to our lower bounds, where polynomially bounded weights imply that width exponential in $d$ is necessary.

\subsection{Main result}

We are ready to present our main theorem in this paper:

\begin{theorem}\label{thm:main}
	Consider the sequence of distributions $\left\{\Ucal(\Acal_{4d})\right\}_{d=1}^{\infty}$, and the sequence of $\Ocal(1)$-Lipschitz functions $f_d:\reals^{4d}\to\reals$ defined in \eqref{eq:f_d}. Then for all $C>0$ and sufficiently large $d$, we have for any activation function $\sigma$ that satisfies both \asmref{asm:lb} and \asmref{asm:ub}, that the following hold.
	\begin{itemize}
		\item 
		For any depth 2 $\sigma$-network $\Ncal_d:\reals^d\to\reals$, with weights bounded in magnitude by $C$, we have
		\[
		\Exp_{\bx\sim\Ucal\p{\Acal_{4d}}}\left[\p{f_d(\bx)-\Ncal_d(\bx)}^2\right]>\frac{1}{400},
		\]
		unless $\Ncal_d$ has width at least $\Omega\p{\frac{\exp\p{\Omega\p{d}}}{\poly(C)}}$.
		\item
		For all $\varepsilon>0$, there exists a depth 3, width $\text{poly}(d,1/\varepsilon)$, $\sigma$-network $\Ncal'_d$, such that
		\[
		\sup_{\bx\in\Acal_{4d}}\abs{f_d(\bx)-\Ncal'_d(\bx)}\le\varepsilon,
		\]
	\end{itemize}
	where the asymptotic notation hides constants that depend solely on $\sigma$.
\end{theorem}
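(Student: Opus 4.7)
The statement is a clean combination of a depth-$2$ impossibility result (first bullet) and a depth-$3$ approximation result (second bullet), which are proved separately in \secref{sec:lb} and \secref{sec:ub}. My proof proposal is therefore a strategy for each, for any fixed $\sigma$ satisfying both \asmref{asm:lb} and \asmref{asm:ub}.

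\textbf{Lower bound.} I would argue by contradiction: assume a depth-$2$ $\sigma$-network $\Ncal_d$ of width $w$ with weights bounded by $C$ achieves mean-squared error below $1/400$ against $f_d$ under $\Ucal(\Acal_{2d})$. A Markov argument on the squared error shows that $|\Ncal_d - f_d|\le 1/4$ on at least a $15/16$-fraction of $\Acal_{2d}$; since $f_d = \ip_d\circ\round\in\{0,1\}$ on $\Acal_{2d}$, thresholding $\Ncal_d$ at $1/2$ gives a depth-$2$ $\sigma$-network followed by one threshold gate that computes $\ip_d(\round(\bx),\round(\by))$ on at least that fraction of inputs. The central step is an average-to-worst-case reduction: for any Boolean target $(\bu,\bv)\in\{0,1\}^{2d}$, I would feed in random perturbations inside the appropriate cell of $\Acal_{2d}$, composed with a random XOR-shift $\bu\mapsto \bu\oplus\ba$ whose effect on $\ip_d$ is the \emph{known} quantity $\inner{\ba,\bv}\bmod 2$ that we can subtract off. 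This reduces exact computation on $\{0,1\}^{2d}$ to average-case correctness on $\Ucal(\Acal_{2d})$, and taking a majority vote over polynomially many independent perturbations yields exact correctness on every Boolean input with high probability. Finally, \asmref{asm:lb} — in particular the total-variation bound — lets me replace each $\sigma$-gate by $\poly(C,d)$ thresholds with pointwise error too small to cross the $1/4$ slack, producing a bounded-depth threshold circuit of size $\poly(C,d)\cdot w$ for $\ip_d$. The classical exponential lower bound for such circuits then forces $w\ge \exp(\Omega(d))/\poly(C)$.

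\textbf{Upper bound.} For the positive result I would build a depth-$3$ $\sigma$-network directly from the algebraic structure of $f_d$. The first hidden layer applies \asmref{asm:ub} coordinatewise to approximate, to accuracy $\varepsilon/\poly(d)$, the piecewise-linear gadgets $\round(x_i)$, $\round(y_i)$ and the envelope factors $\min\{4|x_i-0.5|,1\}$, $\min\{4|y_i-0.5|,1\}$, each with $\poly(d,1/\varepsilon)$ neurons. The second hidden layer combines these: the $d$ products $\round(x_i)\round(y_i)$ are realized by a constant-size $\sigma$-approximation of an $\AND$ gadget, their sum is fed into a single $\sigma$-approximation (again via \asmref{asm:ub}) of the $1$-dimensional Lipschitz function $t\mapsto t\bmod 2$ on the integer lattice, and in parallel the minimum over the $2d$ envelope values is approximated by a soft-min built from $\sigma$-units. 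The output neuron multiplies the approximate parity by the approximate minimum, implemented as a $\sigma$-approximation of $(a,b)\mapsto ab$ on a bounded domain. Propagating errors gives the stated uniform bound, and all widths are $\poly(d,1/\varepsilon)$ by \asmref{asm:ub}.

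\textbf{Main obstacle.} The hardest step is engineering the average-to-worst-case reduction so that the blow-ups remain polynomial in $C$ and $d$. The initial guarantee is that $\Ncal_d$ is correct on the \emph{continuous} measure $\Ucal(\Acal_{2d})$, not per-cell, so a naive coordinate perturbation can fail on entire cells; the random XOR-shift in the Boolean input is what mixes the failure set uniformly across $\{0,1\}^{2d}$. The probability-of-error budget (originally $1/16$) must then survive two compounded averaging steps plus a union bound over $2^{2d}$ Boolean targets, so the number of perturbation samples and the tightness of the threshold approximation of each $\sigma$ must be calibrated jointly against $C$, $d$, and the total-variation constant from \asmref{asm:lb}. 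Getting this bookkeeping right — and verifying that the resulting threshold circuit lies in a regime to which the known exponential lower bound for $\ip_d$ applies — is where the bulk of the technical work sits.
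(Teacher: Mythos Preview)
Your high-level decomposition into a lower bound via reduction to threshold circuits and an explicit depth-$3$ construction matches the paper, and several ingredients (Markov on the squared error, continuous-to-discrete, replacing $\sigma$ by thresholds via the total-variation bound, majority over independent copies) are exactly right. But the core step of the lower bound --- the average-to-worst-case reduction --- does not work as you have written it.

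\medskip
\textbf{Lower bound gap.} You propose the single shift $\bu\mapsto\bu\oplus\ba$ and claim the effect on $\ip_d$ is the ``known quantity $\langle\ba,\bv\rangle\bmod 2$ that we can subtract off''. Two problems. First, shifting only $\bu$ leaves $\bv$ fixed, so the resulting input is uniform in the first block but deterministic in the second; the average-case guarantee over $\Ucal(\{0,1\}^{2d})$ gives you nothing about inputs whose $\bv$-block happens to land in the bad set. Second, even if you shift both blocks, the correction becomes $\langle\ba,\bv\rangle\oplus\langle\bu,\bb\rangle\oplus\langle\ba,\bb\rangle$, which is an input-dependent parity of $\Theta(d)$ bits. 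You cannot ``subtract off'' a parity inside a depth-$2$ threshold architecture without paying exponential width yourself --- so the correction step would destroy the very size bound you are trying to establish. The paper resolves this by \emph{not correcting at all}: it uses the $4$-term identity
\[
\ip_d(\bx,\by)=\ip_d(\bx{+}\bx',\by{+}\by')\oplus\ip_d(\bx',\by')\oplus\ip_d(\bx{+}\bx',\by')\oplus\ip_d(\bx',\by{+}\by')
\]
and concatenates the four pairs into a single $4d$-dimensional $\ip$ instance, so the value is preserved with zero correction. It then pads with $D=100d$ random bits (conditioned on even overlap) and applies a random permutation, and proves that the induced distribution is within a constant factor of uniform in $L_2$; this is where the real work is (\propref{prop:combi} and Lemmas~\ref{lem:binom-square-ratio}--\ref{lem:xor-identity}). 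The reduction therefore consumes a width-$w$ approximator for $\ip_{D+4d}$, not for $\ip_d$. Your proposal misses this dimension-lifting idea, and without it the reduction does not close.

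\medskip
\textbf{Upper bound.} As written your construction is depth $4$, not $3$: rounding (layer $1$), then $\AND$ (layer $2$), then $\bmod\ 2$ and soft-min (layer $3$), then a $\sigma$-approximation of the product at the output. Two observations collapse this. First, the statement only asks for approximation on $\Acal_{2d}$, where the envelope factor $\min_i\{\cdots\}$ is identically $1$; so the soft-min and the final product are unnecessary. Second, $\AND(\round(x_i),\round(y_i))$ is a $1$-Lipschitz function of the single scalar $x_i+y_i$ on $\Acal_1^2$, so rounding and $\AND$ merge into one layer. The paper does exactly this: layer $1$ approximates $g_1(4x_i+4y_i)$ for each $i$, a linear output-neuron sums them, and layer $2$ applies a $\sigma$-approximation of the triangle wave $t\mapsto t\bmod 2$ on $[-\Theta(d),\Theta(d)]$.
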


The above theorem is an immediate consequence of our lower bound (\thmref{thm:lb}) and upper bound (\thmref{thm:ub}), which will be presented in detail in the following sections.

\section{Lower Bound}\label{sec:lb}

In this section, we present our lower bound for the approximation of the function $f_d$. Thereafter, we provide a proof sketch which conveys the main technical ideas behind the result. We begin with formally stating our lower bound as follows.
\begin{theorem}\label{thm:lb}
	Suppose that $\sigma$ satisfies \asmref{asm:lb}, and let $\Ncal:\reals^{2d}\to\reals$ be a depth 2 $\sigma$-network with weights bounded by $C$ that satisfies
	\begin{equation*}
		\Exp_{\bx\sim \Ucal\p{\Acal_{4d}}}\pcc{\p{\Ncal(\bx) - f_d(\bx)}^2} \le \frac{1}{400}.
	\end{equation*}
	Then, $\Ncal$ has width 
	\[
	    \Omega\p{\frac{\exp\p{\Omega\p{d}}}{\poly(C)}},
	\]
	where the asymptotic notation hides constants that depend solely on $\sigma$.
\end{theorem}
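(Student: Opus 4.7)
The plan is to reduce the average-case approximation hypothesis to the exact computation of $\ip_d$ on Boolean inputs by a bounded-size threshold circuit, and then invoke a classical exponential size lower bound for $\ip_d$ in the threshold model. The reduction has three stages: discretize $\sigma$ into thresholds, convert the MSE bound into a Boolean classifier, and perform a randomized average-to-worst-case self-reduction. The main technical work lies in the third stage.

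\textbf{Discretization and Boolean classification.} Markov's inequality applied to $\E[(\Ncal-f_d)^2]\le 1/400$ gives $|\Ncal-f_d|<1/2$ on at least a $99/100$ fraction of $\Ucal(\Acal_{2d})$. Since $f_d\equiv\ip_d(\round(\bx),\round(\by))\in\{0,1\}$ on $\Acal_{2d}$, thresholding $\Ncal$ at $1/2$ yields a $\{0,1\}$-valued classifier that agrees with $\ip_d\circ\round$ on this large subset. I would then replace each of the $w$ activations in $\Ncal$ by a staircase sum of shifted thresholds. \asmref{asm:lb} bounds the total variation of $\sigma$ on the preactivation range $[-R,R]$ (with $R=O(C\sqrt{d})$) by $C_\sigma(1+R)^{\alpha_\sigma}$, so partitioning this range into $M=\poly(C,d,1/\tau)$ levels produces a uniform $\tau$-staircase per neuron. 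For $\tau$ sufficiently small, the resulting depth-$2$ threshold circuit $T$ (post output threshold), of size $w\cdot\poly(C,d)$, still agrees with $\ip_d\circ\round$ on at least a $1-\delta_0$ fraction of $\Acal_{2d}$ for some absolute constant $\delta_0$ (say $\delta_0=1/50$).

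\textbf{Randomized self-reduction.} Given an arbitrary Boolean target $(\ba,\bb)\in\{0,1\}^{2d}$, I use the four-query identity for $\ip_d$: for $\bu,\bv$ uniform in $\{0,1\}^d$,
\[
\ip_d(\ba,\bb)\equiv\ip_d(\ba\oplus\bu,\bb\oplus\bv)\oplus\ip_d(\ba\oplus\bu,\bv)\oplus\ip_d(\bu,\bb\oplus\bv)\oplus\ip_d(\bu,\bv)\pmod 2.
\]
Each of the four Boolean query-points is marginally uniform in $\{0,1\}^{2d}$. For each I additionally sample an independent uniform continuous representative in the corresponding cell of $\Acal_{2d}$, so that every continuous query to $T$ is marginally distributed exactly as $\Ucal(\Acal_{2d})$. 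By $T$'s average-case guarantee and a union bound, all four queries are simultaneously correct with constant probability, so their XOR recovers $\ip_d(\ba,\bb)$. Amplifying with $k=\Theta(d)$ independent repetitions and majority-voting on the XORs drives the per-input failure probability below $2^{-2d-1}$; a union bound over the $2^{2d}$ Boolean targets then shows that some fixing of the random bits yields a deterministic threshold circuit $T^*$ of size $\poly(C,d)\cdot w$ that computes $\ip_d$ correctly on \emph{every} Boolean input.

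\textbf{Applying the threshold lower bound, and the main obstacle.} Finally, I invoke the classical exponential size lower bound $2^{\Omega(d)}$ for bounded-weight threshold circuits computing $\ip_d$ exactly -- e.g., Forster-type sign-rank bounds for depth $2$, combined with results that absorb the constant-depth XOR and MAJORITY combining layers introduced by the self-reduction. This forces $\poly(C,d)\cdot w\ge 2^{\Omega(d)}$, which rearranges to the claimed $w\ge\Omega(\exp(\Omega(d))/\poly(C))$. I expect the self-reduction to be the hardest step, as the paper's introduction flags. The subtleties are (i) verifying that the composite law on each of the four continuous queries is \emph{exactly} $\Ucal(\Acal_{2d})$, so the MSE hypothesis applies without distributional slack; (ii) propagating the constant $1/400$ cleanly through Markov, the staircase discretization, and the four-way union bound so that each XORed query succeeds with probability comfortably above $1/2$, allowing Chernoff amplification with $k=\Theta(d)$ trials to survive a union bound over the $2^{2d}$ Boolean targets; and (iii) arranging the final circuit $T^*$ (after XOR and MAJORITY aggregation) to fall within a regime where an exact exponential lower bound for $\ip_d$ is available, exploiting the polynomial weight bound $C$ throughout.
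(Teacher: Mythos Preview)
Your high-level plan---discretize $\sigma$, use random self-reducibility of $\ip_d$ to pass from average to worst case, then invoke a threshold-circuit lower bound---matches the paper's. The gap is in step~(iii), and it is fatal rather than merely a subtlety. After your self-reduction you take the \emph{Boolean} XOR of four outputs of $T$, repeat $k=\Theta(d)$ times, and take a majority. Four-bit parity is not a linear threshold function, so the XOR already adds a layer, and the majority sits on top of that; your final circuit $T^\ast$ has threshold depth at least~$3$. But $\ip_d$ \emph{is} computable by polynomial-size depth-$3$ threshold circuits---that is exactly the content of \thmref{thm:ub}---so there is no exponential lower bound to invoke at the depth your construction produces. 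The sentence ``results that absorb the constant-depth XOR and MAJORITY combining layers'' is precisely the missing ingredient, and no such results exist; supplying them would already separate depth~$2$ from depth~$3$ threshold circuits.

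The paper avoids this by never leaving depth~$2$. Rather than four separate queries whose Boolean outputs are XOR-ed, it \emph{concatenates} the four pairs (together with $D=100d$ random padding bits of even pairwise-AND parity, and a random permutation) into a \emph{single} input of dimension $D+4d$, and queries the $(D+4d)$-dimensional approximator once; since $\ip$ of a concatenation equals the XOR of the pieces, the identity is realized inside the target function, not by post-processing. All of the randomization (bit-flips, fixed padding, permutation) is affine in $(\bx,\by)$ and is absorbed into the first-layer weights. Amplification is then a \emph{real-valued average} of $n=\poly(d,C,w)$ such blocks performed by the linear output neuron, so depth stays at~$2$; Hoeffding replaces your majority vote. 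The price is that the single-query distribution is only close to uniform in $L_2$, not exactly uniform, and the bulk of the paper's work (\lemref{lem:binom-square-ratio}, \lemref{lem:xor-identity}, \propref{prop:combi}) is to bound this $L_2$ norm; Cauchy--Schwarz against the MSE hypothesis then yields the expected-error bound needed for Hoeffding. Only \emph{after} this depth-$2$ $\sigma$-network computes $\ip_d$ exactly on all Boolean inputs is $\sigma$ discretized to thresholds. Note also the direction: the paper assumes an approximator in dimension $104d$ and builds an exact computer in dimension $d$; staying in dimension $d$ throughout, as you do, forgoes the concatenation trick that keeps the reduction linear.
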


The proof of the above theorem relies on a worst- to average-case reduction in a neural network setting, followed by a reduction to threshold circuits. Given a network which approximates $f_d$ well, we can use it to construct a network which achieves similar accuracy, but with margins that can be made arbitrarily more uniform due to a concentration of measure argument. The crux of our proof is identifying a construction that re-randomizes the input sufficiently well, effectively obtaining a strong enough concentration of measure, while also doing so in a manner which maintains the output value of the function. Thereafter, a very careful technical analysis is required to establish that the accuracy lost due to this re-randomization process is at most a constant. We refer the reader to \subsecref{subsec:sketch} for a more detailed proof sketch of the theorem, and to \appref{app:lb} for the full proof.

We point out that the constant $\frac{1}{400}$ is arbitrary, and our proof technique is capable of improving this to be arbitrarily close to the constant $\frac{1}{16}$ (at the cost of increasing the constants hidden in the asymptotic notation). Since a trivial approximation of a single constant neuron (which returns the value $0.5$) yields accuracy $\frac14$, this indicates that our analysis is very tight, and that adding even exponentially many neurons does not improve upon the trivial approximation by much.

We remark that our weight boundedness assumption is mild, since our lower bound remains exponential in $d$, as long as $C$ is in itself not exponential in $d$. In such a case, where the weights required for expressing $f_d$ must have exponential magnitude, it is known that stable gradient descent must run for exponentially many iterations in order for the weights to reach such a magnitude (see \citet{safran2022optimization} for a more formal result of this kind). This suggests that even if a network of size polynomial in $d$ can approximate $f_d$ well, then in practice, learning such a representation using standard techniques is not tractable, and it is therefore of lesser interest from a practical perspective.

\subsection{Techniques, and proof sketch of \thmref{thm:lb}}\label{subsec:sketch}

In this subsection, we detail the key ideas behind the proof of our lower bound. The reader is referred to \appref{app:lb} for the full proof.

\subsubsection{Step 1: From a continuous distribution on the unit ball to a discrete distribution on the Boolean hypercube}


We begin with assuming that we have a depth 2, $\sigma$-network $\Ncal$, which approximates $f_d$ to accuracy $\frac{1}{400}$, where $\sigma$ satisfies \asmref{asm:lb}, and our goal is to lower bound its width. By our assumption, this network provides a good approximation with respect to the continuous distribution $\Ucal(\Acal_{4d})$ whose support is contained inside the unit ball. Since our aim here is to eventually use lower bounds from threshold circuits to get a lower bound on the width of $\Ncal$, we first reduce this lower bound over the continuous distribution to the discrete distribution $\Ucal(\Scal_{4d})$.
This is done by breaking the distribution to a sum of distributions on the sets $\Scal_{4d}$ and $\Ccal_{4d}$, and using the probabilistic method and Markov's inequality on the randomness induced by the latter continuous component. This implies the existence of a depth 2 neural network $\tilde{\Ncal}$, which has width similar to $\Ncal$, but can approximate $f_d$ on the discrete set $\Scal_{4d}$ to an average square loss of $\frac{1}{399}$. 

Next, we wish to show an approximation lower bound reduction from Boolean functions on the Boolean hypercube to the unit ball. Our reduction technique, which may be of independent interest, relies on embedding a scaled copy of the Boolean hypercube inside the unit ball. Since fitting a hypercube inside the unit ball requires scaling by a factor of $\Theta(\sqrt{d})$, this forces the Lipschitz constant to scale with $d$. To maintain its independence from the input dimension, we first observe that while the unit ball has much smaller volume than the unit hypercube, their packing numbers scale similarly when considering coverings using balls of small (but constant) radii. This is useful, since our distribution has its mass centered at a discrete set of exponentially many points. By embedding the Boolean hypercube into the Cartesian product of a scaled Boolean hypercube and a randomly-chosen and well-spread set which achieves an optimal packing, we are able to maintain Lipschitzness. Lastly, to show that the hardness of approximation property persists, we use a Rademacher complexity argument. Essentially, the hardness in the scaled Boolean hypercube component stems by our assumption that the function is hard to approximate, and the hardness in the packing component stems from its random nature which guarantees that any correlation between the point and its target value is negligible. With this reduction result (see \thmref{thm:ball_reduction} in the appendix), we establish that the existence of $\tilde{\Ncal}$ implies the existence of a $\sigma$-network $\Ncal':\reals^{2d}\to\reals$ which computes $\ip_d$ to an average squared error of $\frac{1}{398}$ on the Boolean hypercube.



\subsubsection{Step 2: From worst- to average-case using randomization}

In the previous step, we constructed a depth 2, $\sigma$-network $\Ncal'$, which approximates $\ip_d$ uniformly over $\{0,1\}^{2d}$, to an average squared error of $\frac{1}{398}$. Intuitively, this means that $\Ncal'$ computes $\ip_d$ well in an average-case sense, since by Markov's inequality, for at least a constant fraction of the inputs $\bx,\by\in\{0,1\}^{d}$, we have that $\round(\Ncal'(\bx,\by)) = \ip_d(\bx,\by)$. Our aim is now to use $\Ncal'$ to construct a depth 2, $\sigma$-network $\Ncal''$, such that $\round(\Ncal''(\bx,\by)) = \ip_d(\bx,\by)$ holds \emph{for all inputs} $\bx,\by\in\{0,1\}^{d}$. To this end, we use a randomization scheme on the input, where we map it to a higher dimensional space, and use a higher dimensional architecture of $\Ncal'$. The purpose of this scheme is to alter the input in a manner which induces as much randomness as possible, but while also keeping its output unchanged. This is achieved by identifying the following three different alterations:
\begin{itemize}
	\item 
	Using the identity
	\[
	\ip_d(\bx,\by) = \ip_d(\bx+\bx',\by+\by') + \ip_d(\bx',\by') + \ip_d(\bx+\bx',\by') + \ip_d(\bx',\by+\by')  \mod 2,
	\]
	which holds for all $\bx,\by,\bx',\by'\in\{0,1\}^{d}$, we can generate uniformly random binary vectors $\bx',\by'\in\{0,1\}^d$, and replace $\bx$ and $\by$ with the vectors $(\bx+\bx',\bx',\bx+\bx',\bx')$ and $(\by+\by',\by',\by',\by+\by')$, respectively, while keeping the IP mod 2 output unchanged. This additional randomness is useful for handling cases where the inputs have a lot of structure (e.g., when $\bx,\by$ are the all-zero or all-one vectors).
	\item
	We can pad our modified inputs with $\Ocal(d)$ many uniformly generated bits, such that pairs where $x_i=y_i=1$ are conditioned to be an even number. Due to this conditioning, we have that the inner product mod 2 value is unchanged. This padding greatly increases the randomness of the input.
	\item
	Lastly, since addition mod 2 is commutative, we can sample a random permutation, and apply it to our modified input, once again altering our input while keeping its IP mod 2 value unchanged. This allows us to add valuable randomness to our input, since permuted vectors with an almost equal number of pairs of the form $x_i=y_i=0$; $x_i=0,y_i=1$; $x_i=1,y_i=0$; $x_i=y_i=1$ are much closer to uniformly sampled vectors.
\end{itemize}
Combining the above into a single randomization process, we have a new, higher dimensional input (but with at most a linear blow-up), which has the same IP mod 2 value, yet whose probability distribution is close (in a certain sense) to a uniformly random input from the higher dimensional space. The bulk of the technical analysis in our proof consists of proving that this process results in a distribution close enough to uniform, so as to incur at most a constant additional loss in our approximation. This requires a very careful and tight analysis of the distribution. 

After constructing the architecture which performs the above randomization process, we repeat the process polynomially many times, we concatenate the obtained hidden layers which perform this computation, and we use the output neuron to average their outputs. This results in a concentration of measure, which makes the approximation error much more uniformly spread over the Boolean hypercube. By a union bound and the probabilistic method, we can find realizations of our random construction that achieve this, and modify our network to incorporate these realizations without changing the architecture of the network $\Ncal''$. This can be done since the operations of inverting a bit of the input or permuting the inputs are linear operations that can be absorbed in the weights of the hidden layer.


\subsubsection{Step 3: Constructing a threshold circuit computing $\ip_d$}

Following the previous two steps, we now have a depth 2, $\sigma$-network $\Ncal''$, such that $\round(\Ncal''(\bx,\by)) = \ip_d(\bx,\by)$ for all $\bx,\by\in\{0,1\}^{d}$. The final step in our proof is to use this architecture to construct a depth 2 threshold circuit with the same property. To this end, we first need to approximate $\sigma$ to arbitrary accuracy using a depth 2 threshold network on a compact domain. This can be done by constructing a piecewise linear approximation as follows: Beginning with the leftmost point in the domain of approximation, we choose a constant function which coalesces with $\sigma$ at this point, and we extend it until its distance from $\sigma$ deviates from our target accuracy, in which case we make a `jump' to a different constant value, continuing in this manner until the approximation is complete. Since \asmref{asm:lb} guarantees that $\sigma$ is `well-behaved' in the sense that it cannot change its output too wildly, we have that we cannot perform too many such jumps;
namely, $\sigma$ can be approximated by a piecewise linear function with not too many discontinuities. Replacing each $\sigma$ with a moderately sized depth 2 threshold network, we obtain a moderately sized depth 2 threshold network $\Ncal'''$ that satisfies $\round(\Ncal'''(\bx,\by)) = \ip_d(\bx,\by)$. Lastly, we turn this threshold network into a threshold circuit by adding a threshold activation on the output neuron. It is a classic result in circuit complexity that $\ip_d$ cannot be computed by a depth 2 threshold circuit with bounded weights, unless its width is exponential in $d$ \citep{hajnal1993threshold}. This implies a lower bound on the width of $\Ncal$, from which the theorem follows.

\section{Upper Bound}\label{sec:ub}

Having presented our lower bound, in this section, we turn to complement it with a relatively straightforward upper bound for depth 3 networks. We also provide a concrete example in the case where $\sigma$ is the ReLU activation function, in which our required width and magnitude of the weights provides a stronger result than the bounds guaranteed in our theorem.

We now formally state our upper bound result below.

\begin{theorem}\label{thm:ub}
	Let $\varepsilon>0$, and suppose that $\sigma$ satisfies \asmref{asm:ub}. Then, there exists a depth 3, width $\Ocal\p{\frac{d^2}{\varepsilon}}$ $\sigma$-network $\Ncal$, with weights bounded in magnitude by $\Ocal\p{\frac{d}{\varepsilon^2}}$, such that
	\[
	\sup_{\bx\in\Acal_{4d}}\abs{\Ncal(\bx) - f_d(\bx)} \le \varepsilon.
	\]
\end{theorem}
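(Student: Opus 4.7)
The plan is to implement the exact representation of $f_d$ on $\Acal_{2d}$ as a composition of two Lipschitz operations, each handled by one hidden layer, so that the two applications of \asmref{asm:ub} fit into depth 3.

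\textbf{Reduction to a clean composition.} On $\Acal_{2d}$ each coordinate is at distance at least $1/4$ from $1/2$, so the $\min$ factor in $f_d$ is identically $1$, giving $f_d(\bx,\by) = \ip_d(\round(\bx),\round(\by)) = \bigl(\sum_{i=1}^d \round(x_i)\round(y_i)\bigr) \bmod 2$. I realize this as $T\bigl(\sum_{i=1}^d g(x_i+y_i)\bigr)$, where $g:\reals\to\reals$ is the piecewise linear function that is $0$ on $[0,1.25]$, rises linearly to $1$ on $[1.25,1.5]$, and equals $1$ on $[1.5,2]$ (hence $4$-Lipschitz and equal to $\round(x_i)\round(y_i)$ on every $(x_i,y_i)\in([0,0.25]\cup[0.75,1])^2$), and $T:\reals\to\reals$ is the piecewise linear ``trapezoidal wave'' that equals $k \bmod 2$ on $[k-1/4,\,k+1/4]$ for integer $k\in\{0,\ldots,d\}$ and interpolates linearly on the gaps (hence $2$-Lipschitz, and constant on a radius-$1/4$ neighborhood of every integer).

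\textbf{Two invocations of \asmref{asm:ub}.} For hidden layer 1, I apply \asmref{asm:ub} to $g$ on $[-2,2]$ with target accuracy $\delta_1 := 1/(4d)$: for each $i\in[d]$ this yields a single-hidden-layer $\sigma$-subnetwork $\tilde g_i$ of width $\Ocal(d)$ with scalars of magnitude $\Ocal(d)$, satisfying $|\tilde g_i(x_i+y_i)-g(x_i+y_i)|\le\delta_1$. Stacking the $d$ subnetworks in parallel realizes hidden layer 1, of total width $\Ocal(d^2)$, with affine readouts $\tilde a_i := \tilde g_i(x_i+y_i)$. Let $z := \sum_i \tilde a_i$ and $s := \sum_i \round(x_i)\round(y_i) \in \{0,\ldots,d\}$; then $|z-s| \le d\delta_1 = 1/4$, so $z$ lies in the flat region of $T$ around $s$ and $T(z) = s \bmod 2 = f_d(\bx,\by)$ \emph{exactly}. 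For hidden layer 2, I apply \asmref{asm:ub} to $T$ on $[-d,d]$ with target accuracy $\varepsilon$: this yields a single-hidden-layer $\sigma$-subnetwork $\tilde T$ of width $\Ocal(d/\varepsilon)$ with scalars of magnitude $\Ocal(d/\varepsilon)$ and $\sup_{u\in[-d,d]} |\tilde T(u)-T(u)|\le\varepsilon$; feeding the affine form $z$ into $\tilde T$ realizes hidden layer 2.

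\textbf{Assembly, error, and main obstacle.} The overall network has depth 3 and width $\max\{\Ocal(d^2),\Ocal(d/\varepsilon)\} = \Ocal(d^2/\varepsilon)$, and its $L_\infty$ error is immediate: $\sup_{(\bx,\by)\in\Acal_{2d}} |\tilde T(z)-f_d(\bx,\by)| = \sup |\tilde T(z)-T(z)| \le \varepsilon$, because the flat-region calibration makes $T(z) = f_d(\bx,\by)$ exactly. The weight magnitudes come out to the polynomial bound stated in the theorem after one accounts for the fact that the outer scalars of each $\tilde g_i$ get multiplied by the inner scalars of $\tilde T$ in the layer-1-to-layer-2 weight matrix, and possibly rebalancing $\delta_1$ against $\varepsilon$. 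The single delicate calibration is the choice $\delta_1 = \Theta(1/d)$, which is exactly what is needed to keep the cumulative layer-1 error inside the flat region of $T$ and thus to preserve an \emph{exact} mod-$2$ computation after the approximation of $T$; this is also what forces the layer-1 width to scale as $d^2$ rather than linearly in $d$. Everything else is a routine assembly of two applications of \asmref{asm:ub}, together with the trivial bookkeeping of weight magnitudes through the composition.
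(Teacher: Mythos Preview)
Your approach is correct and structurally identical to the paper's: decompose $f_d$ on $\Acal_{2d}$ as a mod-$2$ wave applied to $\sum_i \AND(\round(x_i),\round(y_i))$, realize each piece via \asmref{asm:ub}, and absorb the affine readout of the inner piece into the input affine map of the outer piece to stay at depth $3$. The one substantive variation is your choice of a \emph{trapezoidal} wave $T$ with flat plateaus of radius $1/4$ around each integer, in place of the paper's triangle wave $g_2$: this lets you take the inner accuracy $\delta_1=1/(4d)$ independently of $\varepsilon$, because the plateau absorbs the accumulated layer-$1$ error exactly and only the outer approximation error survives; the paper instead sets $\delta_1=\varepsilon/(2d)$ and propagates the inner error through the $1$-Lipschitz constant of $g_2$ via the triangle inequality. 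What this buys you is a first hidden layer of width $O(d^2)$ rather than the paper's $O(d^2/\varepsilon)$, and a composed weight bound of $O(d)\cdot O(d/\varepsilon)=O(d^2/\varepsilon)$ rather than the $O(d/\varepsilon^2)$ in the theorem statement---a different polynomial, but equally sufficient for \thmref{thm:main}. Two small bookkeeping fixes: since $z$ can reach $d+\tfrac14$ when $s=d$, apply \asmref{asm:ub} to $T$ on an interval such as $[-1,d+1]$ rather than $[-d,d]$; and your sentence ``come out to the polynomial bound stated in the theorem'' should be read as ``come out to \emph{a} polynomial bound'', since no rebalancing of $\delta_1$ against $\varepsilon$ in your construction will reproduce exactly $O(d/\varepsilon^2)$.
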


The proof of the theorem, which appears in \appref{app:ub}, utilizes the fact that $f_d$ can be approximated efficiently by composing two different simple functions, which focus on the last $2d$ coordinates of the input. Since a depth 3 network has two hidden layers with non-linear $\sigma$ activations, we are able to use each hidden layer to compute each function, and obtain the desired approximation.

We remark that we provide our positive approximation result in terms of the $L_{\infty}$ norm rather than the $L_2$ norm with respect to the uniform distribution supported on $\Acal_{4d}$ as our lower bound does. Since $L_{\infty}$ approximation is more stringent than $L_2$, this provides a more general result which implies an $L_2$ approximation of $f_d(\cdot,\cdot)$ with respect to \emph{any} distribution supported on $\Acal_{4d}$.

To give a more concrete example of an approximation obtained by our theorem, below we specify how this construction can be done when $\sigma$ is the ReLU activation.

\begin{example}\label{ex:relu}
	Let $\relu{x}\coloneqq\max\{0,x\}$ denote the ReLU activation function, and define 
	\[
	t_d(x)\coloneqq\relu{z}+\sum_{i=1}^{d}2\cdot(-1)^i\relu{z-i}.
	\]
	Then, for an input $\bx'\coloneqq(\bz,\bx,\by)\in\reals^{2d}\times\reals^d\times\reals^d$, the network given by
	\[
	\Ncal(\bz,\bx,\by)\coloneqq t_d\p{\sum_{i=1}^d 3\sqrt{d}\relu{4x_i+4y_i-5} - 3\sqrt{d}\relu{4x_i+4y_i-6}}
	\]
	satisfies
	\[
	\sup_{\bx'\in\Acal_{d}}\abs{\Ncal(\bx') - f_d(\bx')} = 0.
	\]
\end{example}

It is straightforward to verify that $\Ncal$ is a depth 3, width $2d$ ReLU network, with weights bounded in magnitude by $\Ocal(d)$. Moreover, a simple computation shows that $\Ncal$ coalesces with $f_d$ on $\Acal_{4d}$. Lastly, this approximation is also sparse, in the sense that it requires only $\Ocal(d)$ neuron connections (see \figref{fig:relu} for a visualization of the functions computed in each layer).

    \begin{figure}[]
        \centering
        \begin{subfigure}[b]{0.32\textwidth}
            \centering
            \begin{tikzpicture}[scale=0.67]
                \begin{axis}[zmin=-0.1,zmax=1.1,view={30}{40},grid=both]
                    \addplot3 [surf,unbounded coords=jump,shader=interp,domain=0:1,y domain=0:1,samples=90] {max(0,4*x+4*y-5) - max(0,4*x+4*y-6)};
                \end{axis}
            \end{tikzpicture}
            \caption{}
            \label{fig:a}
        \end{subfigure}
        \hfill
        \begin{subfigure}[b]{0.67\textwidth}
            \centering
            \begin{tikzpicture}
            \begin{axis}[y=2.25cm,x=2.25cm,xmin=-0.15,xmax=4.15]
                \addplot[height=1cm,color=blue,domain=-1:7,very thick] plot {max(0,x) - 2*max(0,x-1) + 2*max(0,x-2) - 2*max(0,x-3) + 2*max(0,x-4) - 2*max(0,x-5) + 2*max(0,x-6)};
            \end{axis}
            \end{tikzpicture}
            \caption{}
            \label{fig:b}
        \end{subfigure}
        \caption{Computing $f_d$ using a depth 3 ReLU network. Subfigure~\ref{fig:a} plots the function $(x,y)\mapsto\relu{4x+4y-5} - \relu{4x+4y-6}$, which equals $\AND(\round(x),\round(y))$ for all $x,y\in[0,0.25]\cup[0.75,1]$. Subfigure~\ref{fig:b} plots the function $x\mapsto t_d(x)$, defined in Example~\ref{ex:relu}. When composing the latter with a scaled sum of the former, iterating over all pairs of coordinates $x_i,y_i$, we obtain a function that coalesces with $f_d$ on $\Acal_{4d}$. Best viewed in color.}
        \label{fig:relu}
    \end{figure}
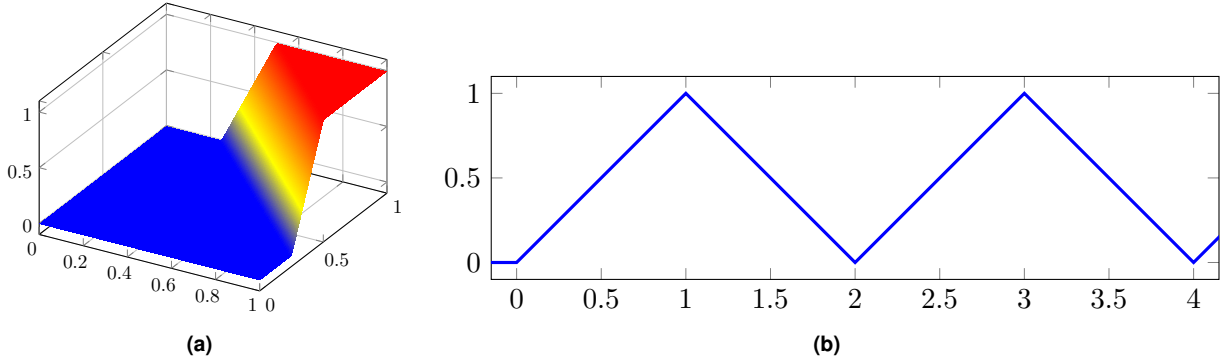

Following the works \citep{malach2019deeper,malach2021connection}, which show that some functions that were used to prove approximation lower bounds for neural networks cannot be learned efficiently using standard methods, \citet{safran2022optimization} have shown an optimization-based separation result where the deeper architecture can provably learn the efficient representation from finite data, using standard techniques such as gradient descent. It is interesting to note that Example~\ref{ex:relu} provides a simple, linear in size and sparse approximation of $f_d$. This simplicity is much desired, since it may suggest that similarly to \citet{safran2022optimization}, learning this representation from finite data using a standard learning algorithm is tractable, and despite the simplicity of this ReLU approximation, our lower bound for this function provides a separation which is stronger than previously known results. We leave the study of proving such a stronger optimization-based separation result as an intriguing future work direction.

\ifanonymize 
\else
    \subsection*{Acknowledgements.}

    Paul Valiant is partially supported by NSF award CCF-2127806. We thank Srikanth Srinivasan for useful discussions.
\fi

\bibliographystyle{plainnat}
\bibliography{citations}

\appendix

\section{Proofs}

\subsection{Proof of \thmref{thm:lb}}\label{app:lb}

Before we begin, we first state a relaxation of \asmref{asm:lb} which we will use in our proof. Albeit slightly technical, it greatly extends the types of activation functions for which our lower bound holds. The following definitions are required for understanding the assumption statement: Given a function $f:\reals\to\reals$, we define its \emph{total variation} on the interval $[a,b]\subseteq\reals$ as $V_a^b(f)\coloneqq\sup_{P}\sum_{i=1}^{n_P}\abs{f(x_i)-f(x_{i-1})}$, where the supremum is taken over all the possible partitions of the interval $[a,b]$. For a function class $\Fcal$ and sample $\bx_1,\ldots,\bx_n$, the (empirical) Rademacher complexity of $\Fcal$ is given by $\Rcal_n(\Fcal)\coloneqq\Exp\pcc{\sup_{f\in\Fcal}|\frac1n\sum_{i=1}^n\xi_if(\bx_i)|}$, where the expectation is taken over $\xi_1,\ldots,\xi_n$ which are i.i.d.\ Rademacher random variables.

\begin{assumption}[Lower bound -- generalized]\label{asm:lb_relaxed}
	The activation function $\sigma$ is (Lebesgue) measurable and satisfies
	\[
	   \abs{\sigma(x)}\le C_{\sigma}\p{1+\abs{x}^{\alpha_{\sigma}}},
	\]
        and
	\[
	   V_a^b(\sigma) \le C_{\sigma}\p{1+(|a|+|b|)^{\alpha_{\sigma}}},
	\]
    for all $x\in\reals$ and real $a<b$; and the function class $\Ncal_{w,C,d}$ of $d$-dimensional $\sigma$-networks of depth 2, width at most $w$, and weights bounded by $C$, satisfies
    \[
        \Rcal_n(\Ncal_{w,C,d}) \le \frac{\p{w\cdot C\cdot d}^{\alpha_{\sigma}}}{\sqrt{n}},
    \]
    for any dataset of size $n$ contained inside $[0,1]^d$, and for some constants $C_{\sigma},\alpha_{\sigma}$, which depend solely on $\sigma$.
\end{assumption}

Our above assumption is very mild, and as previously discussed, it is satisfied by many commonly used activations such as ReLU, threshold and sigmoidal activations as a particular case. The fact that these activation functions satisfy the first two requirements is straightforward, and an appropriate Rademacher bound for thresholds follows from \citet[Thm.~20.6]{shalev2014understanding} and \citet[Eq.~(6)]{boucheron2005theory}, and for $\Ocal(1)$-Lipschitz activation functions from \citet{golowich2018size}. The boundedness of $|\sigma(x)|$ is a standard assumption when proving approximation lower bounds, and it is also used in \citet{eldan2016power} for example. Since our proof is based on a reduction to threshold circuits, our technique fails if we consider certain activation functions that are highly oscillatory, since there are such pathological activations that can be used to compute any Boolean function $f:\{0,1\}^d\to\{0,1\}$, even with just a single neuron. In light of this, we also make an assumption that the total variation of the activation function is polynomially bounded on a compact domain. Lastly, our assumption on the boundedness of the Rademacher complexity merely requires that the depth 2 networks which employ $\sigma$ activations are statistically learnable. Such an assumption is again very mild, and holds for essentially any activation function which is used in practice.

Before we can prove the theorem, we will first state and prove several auxiliary lemmas that will be used later on. In what follows, we use the notation $\norm{\Dcal}_2$ to denote the $L_2$ norm of a discrete distribution $\Dcal$. Namely, for a random variable $X\sim\Dcal$ sampled from a sample space $\Xcal$, we have
\[
\norm{\Dcal}_2 \coloneqq \sqrt{\sum_{x\in\Xcal}\p{\pr_{x\sim\Dcal}\pcc{X=x}}^2}.
\]

The following lemma provides an upper bound on certain subsets of the $L_2$ norm of the distribution we analyze in our reduction scheme.

\begin{lemma}\label{lem:binom-square-ratio}
	For $d,D$ positive integers divisible by 4, and given $d_1,d_2,d_3,d_4\geq 0$ with $d_1+d_2+d_3+d_4=d$ then we have:
	
	\begin{align}\label{eq:binom-square-ratio}
		\sum_{(D_1, D_2,D_3,D_4): \sum_i D_i = D} \frac{{D \choose D_1,D_2,D_3,D_4}^2}{{D+d \choose D_1+d_1,D_2+d_2,D_3+d_3,D_4+d_4}} \leq e^{\frac{4}{D}\sum_{i=1}^4 (d_i-\frac{d}{4})^2}\cdot \p{1+\frac{d}{D}}^{3/2}\cdot 4^{D-d}
	\end{align}
	
\end{lemma}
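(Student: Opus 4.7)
The plan is to rewrite the summand as a ratio of two multinomial probabilities, apply a local central limit theorem (LCLT) to replace the sum by a Gaussian integral, and evaluate that integral in closed form. Let $P = \mathrm{Mult}(D;\tfrac 14,\tfrac 14,\tfrac 14,\tfrac 14)$ and $Q = \mathrm{Mult}(D+d;\tfrac 14,\tfrac 14,\tfrac 14,\tfrac 14)$. A direct factorial manipulation gives
\[
\frac{\binom{D}{\vec D}^2}{\binom{D+d}{\vec D+\vec d}} \;=\; 4^{D-d}\cdot\frac{P(\vec D)^2}{Q(\vec D+\vec d)},
\]
so it suffices to show that $\sum_{\vec D}P(\vec D)^2/Q(\vec D+\vec d) \le e^{(4/D)\sum_i(d_i-d/4)^2}(1+d/D)^{3/2}$.

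Next, parametrize the simplex $\{\sum D_i = D\}$ by $\Delta_i = D_i - D/4$ and set $\Delta_i^d = d_i - d/4$ (so $\sum \Delta_i = \sum\Delta_i^d = 0$). Stirling's formula with explicit $O(1/n)$ remainder gives the LCLT approximation $P(\vec D)\approx \frac{16}{(2\pi D)^{3/2}}\exp\!\big({-\tfrac 2D\sum_{i=1}^4\Delta_i^2}\big)$, and similarly for $Q(\vec D+\vec d)$ with $D\to D+d$ and $\Delta_i\to \Delta_i+\Delta_i^d$; here the exponent uses the convenient fact that the inverse covariance of the uniform $4$-ary multinomial on the simplex acts as $\Delta^\top\Sigma^{-1}\Delta = (4/D)\sum_{i=1}^4 \Delta_i^2$. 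With $a = 4/D$ and $b = 2/(D+d)$, the exponent of $\log(P^2/Q)$ becomes $-(a-b)\sum\Delta_i^2 + 2b\sum \Delta_i\Delta_i^d + b\sum(\Delta_i^d)^2$; completing the square in $\vec\Delta$ (the shift $\tfrac{b}{a-b}\vec\Delta^d$ preserves the hyperplane because $\sum\Delta_i^d = 0$) leaves a centered Gaussian times a constant exponent $\tfrac{ab}{a-b}\sum(\Delta_i^d)^2 = \tfrac 4{D+2d}\sum(d_i-d/4)^2$. Replacing the lattice sum by the corresponding Gaussian integral on the $3$-dimensional hyperplane and combining with the LCLT normalization prefactors yields
\[
\sum_{\vec D}\frac{P(\vec D)^2}{Q(\vec D+\vec d)} \;\lesssim\; \Big(\tfrac{(D+d)^2}{D(D+2d)}\Big)^{3/2}\exp\!\Big(\tfrac{4}{D+2d}\sum(d_i-d/4)^2\Big),
\]
after which the elementary bounds $\tfrac{(D+d)^2}{D(D+2d)} = 1+\tfrac{d^2}{D(D+2d)} \le 1+\tfrac dD$ and $\tfrac 4{D+2d}\le \tfrac 4D$ finish the estimate.

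The main difficulty is making the LCLT step quantitative uniformly in $\vec D$: Stirling is sharp only in the ``bulk'' region where every $D_i = \Omega(D)$. In the tail, where some $D_i$ is small, I would use the crude Vandermonde lower bound $\binom{D+d}{\vec D+\vec d} \ge \binom{D}{\vec D}\binom{d}{\vec d}$, which reduces the tail contribution to $\sum_{\vec D\in\text{tail}}\binom{D}{\vec D}/\binom{d}{\vec d}$; standard Chernoff-type concentration bounds for $\mathrm{Mult}(D;\tfrac 14,\ldots)$ make this exponentially small in $D$ relative to $4^{D-d}(1+d/D)^{3/2}$ and therefore absorbable into the bulk estimate. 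The sum-to-integral error on the bulk is handled by Euler--Maclaurin: since the Gaussian integrand is smooth at the lattice scale, the correction is $O(1/\sqrt D)$ relatively and contributes only lower-order terms that can be absorbed into the prefactor.
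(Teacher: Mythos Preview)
Your Gaussian/LCLT route is genuinely different from the paper's, and the heuristic calculation does recover the correct leading behaviour. But the lemma is an \emph{exact} inequality with explicit constants, and your plan has a real gap at the bulk/tail split. The Vandermonde tail bound leaves you with $\sum_{\mathrm{tail}}\binom{D}{\vec D}/\binom{d}{\vec d}$, which must be bounded by $4^{D-d}(1+d/D)^{3/2}e^{(4/D)\sum(\Delta_i^d)^2}$. Take $\vec d=(d,0,0,0)$: then $\binom{d}{\vec d}=1$, and Chernoff gives $\sum_{\mathrm{tail}}\binom{D}{\vec D}\le 4^D e^{-c(\epsilon)D}$ where $c(\epsilon)$ is the large-deviation rate for $\{\min_i D_i<\epsilon D\}$. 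One needs $4^d e^{-c(\epsilon)D}$ to lose against $e^{3d^2/D}$, but $c(\epsilon)\le\log(4/3)$ for every $\epsilon>0$, with equality only as $\epsilon\to 0$---which blows up the bulk Stirling error $O(1/(\epsilon D))$. Balancing the cutoff so that both the tail and the bulk are under control, uniformly in $d$ and $\vec d$, is not ``standard'' and may simply fail for small $d$: the available slack in the prefactor between your $\bigl(\tfrac{(D+d)^2}{D(D+2d)}\bigr)^{3/2}$ and the target $(1+d/D)^{3/2}$ is only $O(d/D)$ relatively, which for $d=4$ is the same order as the Stirling and Euler--Maclaurin corrections rather than dominating them.

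The paper sidesteps all asymptotics in the summation step with an algebraic trick. An elementary comparison of $\frac{(D_i+d_i)!}{D_i!}$ with its value at $D_i=D/4$ gives the pointwise bound
\[
\frac{\binom{D}{\vec D}}{\binom{D+d}{\vec D+\vec d}}\;\le\;\frac{\binom{D}{D/4,\ldots,D/4}}{\binom{D+d}{D/4+d_1,\ldots,D/4+d_4}}\;\prod_{i=1}^4\Bigl(1+\tfrac{4d_i}{D}\Bigr)^{D_i-D/4},
\]
which factors the entire $\vec D$-dependence into a product $\prod_i p_i^{D_i}$ with $p_i=1+4d_i/D$. Then $\sum_{\vec D}\binom{D}{\vec D}\prod_i p_i^{D_i}=(\sum_i p_i)^D=(4+4d/D)^D$ exactly by the multinomial theorem---no LCLT, no bulk/tail split, no sum-to-integral error. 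The remaining $\vec D$-free factors are controlled by one-sided second-order Taylor bounds on $\log$ and $\log\Gamma$ (Lagrange remainder), which yield genuine inequalities rather than asymptotic approximations and produce the stated constants directly. Your approach would, at best, recover the lemma only up to a multiplicative $1+O(1/D)$ that then has to be argued away; the paper's route never incurs such a term.
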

\begin{proof}
	We start by comparing $\frac{(D_i+d_i)!}{D_i!}$ to $\frac{(D/4+d_i)!}{D/4!}$. For $D_i \geq D/4$ the ratio of these two expressions equals $\prod_{j=D/4+1}^{D_i} \frac{j+d_i}{j}\leq \prod_{j=D/4+1}^{D_i} \frac{D/4+d_i}{D/4} =(1+\frac{4d_i}{D})^{D_i-D/4}$. On the other hand, for $D_i<D/4$ that ratio of our two expressions equals $\prod_{j=D_i+1}^{D/4} \frac{j}{j+d_i}\leq \prod_{j=D_i+1}^{D/4}\frac{D/4}{D/4+d_i} =(1+\frac{4d_i}{D})^{D_i-D/4}$. Thus in all cases, $\frac{(D_i+d_i)!}{D_i!}\leq (1+\frac{4d_i}{D})^{D_i-D/4}\frac{(D/4+d_i)!}{D/4!}$. We apply this inequality to bound the terms on the left hand side of \eqref{eq:binom-square-ratio}:
	\begin{align}\label{eq:ratio-bound}
		\frac{{D \choose D_1,D_2,D_3,D_4}}{{D+d \choose D_1+d_1,D_2+d_2,D_3+d_3,D_4+d_4}} &= \frac{D!}{(D+d)!}\frac{(D_1+d_1)!}{D_1!} \frac{(D_2+d_2)!}{D_2!} \frac{(D_3+d_3)!}{D_3!} \frac{(D_4 +d_4)!}{D_4! }\\
		& \leq \frac{{D \choose D/4,D/4,D/4,D/4}}{{D+d \choose D/4+d_1,D/4+d_2,D/4+d_3,D/4+d_4}} \prod_{i=1}^4 (1+\frac{4d_i}{D})^{D_i-D/4}\notag
	\end{align}
	
	We will evaluate the sum in Equation \ref{eq:binom-square-ratio} and using the identity \[(p_1 +p_2+p_3+p_4)^D = \sum_{(D_1, D_2,D_3,D_4): \sum_i D_i = D}{D \choose D_1,D_2,D_3,D_4 } p_1^{D_1} p_2^{D_2} p_3^{D_3} p_4^{D_4}\] along with the above equation we can write,
	
	\begin{align*}
		\sum_{(D_1, D_2,D_3,D_4): \sum_i D_i = D} \frac{{D \choose D_1,D_2,D_3,D_4}^2}{{D+d \choose D_1+d_1,D_2+d_2,D_3+d_3,D_4+d_4}} \\
		= \sum_{(D_1, D_2,D_3,D_4): \sum_i D_i = D} {D \choose D_1,D_2,D_3,D_4} \frac{{D \choose D_1,D_2,D_3,D_4}}{{D+d \choose D_1+d_1,D_2+d_2,D_3+d_3,D_4+d_4}} \\
		{\leq} \frac{{D \choose D/4,D/4,D/4,D/4}}{{D+d \choose D/4+d_1,D/4+d_2,D/4+d_3,D/4+d_4}} \left(\prod_{i=1}^4 (1+\frac{4d_i}{D})^{-D/4}\right)
		\sum_{(D_1, D_2,D_3,D_4): \sum_i D_i = D} {D \choose D_1,D_2,D_3,D_4} \prod_{i=1}^4 (1+\frac{4d_i}{D})^{D_i} \\
		= \frac{{D \choose D/4,D/4,D/4,D/4}}{{D+d \choose D/4+d_1,D/4+d_2,D/4+d_3,D/4+d_4}} \left(\prod_{i=1}^4 (1+\frac{4d_i}{D})^{-D/4}\right)
		(4+\frac{4\sum_i d_i}{D})^{D} 
	\end{align*}
	
	Since the second derivative of the function $\log( x)$ is $\geq -1$ for inputs $x\geq 1$, we use the Lagrange remainder form of the Taylor expansion around any $\ell\geq 1$ to lower bound the logarithm function for all $x\geq 1$ as $\log(x)\geq \log(\ell)+\frac{1}{\ell}(x-\ell) -\frac{1}{2}(x-\ell)^2$. We use this to bound the last two terms above, bounding $\log(x)$ for $x=1+\frac{4d_i}{D}$, centering our approximation at $\ell=1+\frac{d}{D}$, and using the fact that $\sum_i d_i=d$:
	\begin{align*}\left(\prod_{i=1}^4 (1+\frac{4d_i}{D})^{-D/4}\right)
		(4+\frac{4\sum_i d_i}{D})^{D} &\leq \left(\prod_{i=1}^4 \left(e^{\log(1+\frac{d}{D})+\frac{4 d_i-d}{D(1+\frac{d}{D})}-\frac{1}{2}(\frac{4 d_i-d}{D})^2}\right)^{-D/4}\right)(4+\frac{4\sum_i d_i}{D})^{D}\\
		&= 4^D e^{\sum_{i=1}^4 2\frac{(d_i-\frac{d}{4})^2}{D}}
	\end{align*}
	
	We then turn to the first term, which we reexpress as
	\[\frac{{D \choose D/4,D/4,D/4,D/4}}{{D+d \choose D/4+d_1,D/4+d_2,D/4+d_3,D/4+d_4}} = \frac{D!}{(D+d)!}\frac{(D/4+d_1)!}{(D/4)!} \frac{(D/4+d_2)!}{(D/4)!} \frac{(D/4+d_3)!}{(D/4)!} \frac{(D/4 +d_4)!}{(D/4)! }\]
	
	We bound $(D/4+d_i)!$ by reexpressing it via the Gamma function as $\Gamma(1+D/4+d_i)$. We then use the Lagrange remainder form of the Taylor expansion of the function $f(x):=\log\Gamma(x)$ around $\ell=1+\frac{D}{4}+\frac{d}{4}$ to conclude that there exists $y$ between $\ell$ and $x$ 
	such that \[f(x)=f(\ell)+(x-\ell)f'(\ell)+\frac{1}{2}f''(y)(x-\ell)^2\]
	
	The second derivative of the $\log\Gamma$ function at $x$ is bounded by $\frac{x+1}{x^2}\leq \frac{x+1}{x^2-1}=\frac{1}{x-1}$. Thus, letting $x_i=1+D/4+d_i$, we have, that there exist $y_i$ between $x_i$ and $\ell$ for which
	\begin{align*}\prod_{i=1}^4 (D/4+d_i)! &\leq \prod_{i=1}^4 e^{f(\ell)+(x_i-\ell)f'(\ell)+\frac{1}{2}f''(y_i)(x_i-\ell)^2}\\
		& = (D/4+d/4)!^4 e^{\frac{1}{2}\sum_{i=1}^4 f''(y_i)(d_i-\frac{d}{4})^2}\\
		&\leq (D/4+d/4)!^4 e^{\frac{2}{D}\sum_{i=1}^4 (d_i-\frac{d}{4})^2}
	\end{align*}
	where the equality makes use of the fact that the middle (linear in $x_i$) term vanishes since $\sum_{i=1}^4 (x_i-\ell)=0$; the last inequality makes use of the fact that both $x_i,\ell\geq 1+D/4$, and thus, since $y_i$ is between these, we have $y_i\geq 1+D/4$, and hence our bound on the second derivative of $\log\Gamma(y)$ is at most $\frac{4}{D}$.
	
	Thus, overall, we have shown
	\begin{align*}\sum_{(D_1, D_2,D_3,D_4): \sum_i D_i = D} \frac{{D \choose D_1,D_2,D_3,D_4}^2}{{D+d \choose D_1+d_1,D_2+d_2,D_3+d_3,D_4+d_4}}& \leq 4^D e^{\sum_{i=1}^4 2\frac{(d_i-\frac{d}{4})^2}{D}} \frac{D!(D/4+d/4)!^4}{(D+d)! (D/4)!^4} e^{\frac{2}{D}\sum_{i=1}^4 (d_i-\frac{d}{4})^2}\\
		&= 4^D  \frac{D!(D/4+d/4)!^4}{(D+d)! (D/4)!^4} e^{\frac{4}{D}\sum_{i=1}^4 (d_i-\frac{d}{4})^2}
	\end{align*}
	
	Finally, we bound the ratio of factorials as follows. Define the function $f$ on integer input $f(j)\coloneqq 4^{-4j}{4j\choose j,j,j,j}=4^{-4j}\frac{(4j)!}{j!^4}$. We will show that for any integers $k\geq j$ we have $\frac{f(j)}{f(k)}\leq (j/k)^{3/2}$. We have $\frac{f(j)}{f(j+1)}=\frac{(4j+4)^3}{(4j+1)(4j+2)(4j+3)}$. Expressing each term in the denominator as a weighted average of $4j$ and $4j+4$ and applying the (weighted) AM-GM inequality to each term on the denominator separately, we thus lower bound the denominator by $(4j)^{3/2}(4j+4)^{3/2}$ and thus upper bound $\frac{f(j)}{f(j+1)}\leq \frac{(4j+4)^{3/2}}{(4j)^{3/2}}=(\frac{j+1}{j})^{3/2}$. Multiplying this bound for all numbers between $j$ and $k$ yields $\frac{f(j)}{f(k)}\leq (j/k)^{3/2}$ as claimed.
	
	Thus $\frac{D!(D/4+d/4)!^4}{(D+d)! (D/4)!^4}\leq \p{\frac{D+d}{D}}^{3/2}$. And we have derived \eqref{eq:binom-square-ratio} as claimed.
\end{proof}

The following lemma bounds the moment generating function of a certain distribution, which arises in our analysis of the reduction scheme.

\begin{lemma}\label{lem:xor-identity}
	For any dimension $d$ and any vectors $\bx,\by\in\{0,1\}^d$, consider the process of picking random vectors $\bx',\by'\leftarrow\{0,1\}^d$ and outputting 
	\begin{align*}
		A\coloneqq(\bx+\bx',\bx',\bx+\bx',\bx'),\\
		B\coloneqq(\by+\by',\by',\by',\by+\by'),
	\end{align*}
	where all additions are mod $2$. Among these $4d$ entries, let $d_1$ count the number of indices $j\in\{1,\ldots,4d\}$ where $A_j=B_j=0$; let $d_2$ count the entries where $A_j=0,B_j=1$; let $d_3$ count the entries where $A_j=1,B_j=0$; and let $d_4$ count the entries where $A_j=B_j=1$. Then 
	\[
	\Exp_{d_1,d_2,d_3,d_4}\pcc{e^{s\sum_{i=1}^4 (d_i-d)^2}}\leq \left(\frac{1}{1-24ds}\right)^2
	\]
\end{lemma}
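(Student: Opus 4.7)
Proof proposal for \lemref{lem:xor-identity}.

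The plan is to exploit independence across the $d$ position-indices together with a Hubbard--Stratonovich (Gaussian smoothing) trick to linearise the squared norm. For each $i\in[d]$, let $T^{(i)}=(T^{(i)}_1,T^{(i)}_2,T^{(i)}_3,T^{(i)}_4)$ record how the four block-pairs sitting at position $i$ (namely the pairs coming from indices $i,d{+}i,2d{+}i,3d{+}i$ of $A,B$) split among the four types. Then $T^{(i)}_1+T^{(i)}_2+T^{(i)}_3+T^{(i)}_4=4$ and $(d_1,d_2,d_3,d_4)=\sum_{i=1}^d T^{(i)}$. Crucially, $T^{(i)}$ depends only on $(x_i,y_i,x'_i,y'_i)$, so conditional on $(\bx,\by)$ the vectors $T^{(1)},\dots,T^{(d)}$ are mutually independent.

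A short case analysis on $(x_i,y_i)\in\{0,1\}^2$ yields the exact laws of the $T^{(i)}$: when $(x_i,y_i)=(0,0)$ all four block-pairs collapse to $(x'_i,y'_i)$ and $T^{(i)}=4e_K$ with $K$ uniform on $[4]$; when $(x_i,y_i)=(1,0)$ blocks $1,3$ and blocks $2,4$ pair up, giving $T^{(i)}\in\{(2,0,2,0),(0,2,0,2)\}$ each with probability $1/2$; when $(x_i,y_i)=(0,1)$, analogously $T^{(i)}\in\{(2,2,0,0),(0,0,2,2)\}$; and when $(x_i,y_i)=(1,1)$ the four block-pairs hit all four types, so $T^{(i)}=(1,1,1,1)$ deterministically. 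In every case each marginal bit-pair is uniform on $\{0,1\}^2$, so $\Exp[T^{(i)}_k]=1$ and $S^{(i)}\coloneqq T^{(i)}-\mathbf{1}$ is mean-zero.

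Next I invoke the identity $e^{s\|v\|^2}=\Exp_{U\sim\Ncal(0,I_4)}[e^{\sqrt{2s}\inner{U,v}}]$, valid for all $v\in\reals^4$ and $s\ge 0$. Writing $\sum_{k=1}^4(d_k-d)^2=\bigl\|\sum_i S^{(i)}\bigr\|^2$, Fubini and independence give
\[
\Exp\pcc{e^{s\sum_{k=1}^4(d_k-d)^2}}=\Exp_U\pcc{\prod_{i=1}^d\Exp_{S^{(i)}}\pcc{e^{\sqrt{2s}\inner{U,S^{(i)}}}}}.
\]
Conditional on $U$, each factor is the MGF of a bounded mean-zero scalar, and Hoeffding's lemma applies. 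In the worst case $(x_i,y_i)=(0,0)$ the variable $\inner{U,S^{(i)}}=-\inner{U,\mathbf{1}}+4u_K$ lies in an interval of length $4(\max_k u_k-\min_k u_k)$, whose square is bounded by a constant multiple of $\|U\|^2$; in the $(1,0)$ and $(0,1)$ cases one uses $\cosh(\sqrt{2s}\ell)\le e^{s\ell^2}$ with $\ell$ being a signed sum of the $u_k$'s and Cauchy--Schwarz; in the $(1,1)$ case the factor is $1$. Combining yields a per-factor bound of the form $e^{c\,s\|U\|^2}$ for an absolute constant $c$ (a careful but routine calculation gives $c\le 12$). Multiplying over $d$ positions and invoking the Gaussian moment formula $\Exp_U[e^{\theta\|U\|^2}]=(1-2\theta)^{-2}$ then produces the claimed bound $(1-24ds)^{-2}$.

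The main obstacle is the highly correlated case $(x_i,y_i)=(0,0)$: all four block-pairs are forced equal, so $T^{(i)}$ is maximally concentrated on a single coordinate and $S^{(i)}$ has $\ell_\infty$ norm $3$. Verifying that the per-factor bound still scales only as $e^{O(s)\|U\|^2}$ (and in particular that the induced Gaussian integral converges for the stated range of $s$) is what drives the constant in the lemma. Tracking this constant carefully through the Hoeffding step for the $(0,0)$ case -- and noticing that the $(1,1)$ positions contribute trivially and so only $n_{00}+n_{01}+n_{10}\le d$ positions actually enter the product -- is the key quantitative ingredient.
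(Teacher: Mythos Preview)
Your proposal is correct and follows essentially the same route as the paper: decomposition into independent per-index contributions, the identical four-way case analysis on $(x_i,y_i)$, the same Gaussian linearisation identity $e^{s\|v\|^2}=\Exp_U[e^{\sqrt{2s}\langle U,v\rangle}]$, a per-index sub-Gaussian MGF bound, and the final Gaussian moment computation $\Exp_U[e^{\theta\|U\|^2}]=(1-2\theta)^{-2}$. The only cosmetic difference is that for the $(0,0)$ case the paper bounds the per-index MGF via a direct differential inequality $f''\le 12f$ (yielding $\cosh(\sqrt{12}\|t\|)\le e^{6\|t\|^2}$) rather than Hoeffding's lemma, but both arguments land on the same constant $c=12$ and hence the same $(1-24ds)^{-2}$.
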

\begin{proof}
	Our overall analysis technique will be to first compute the moment generating function (MGF) of the distribution of $(d_1,d_2,d_3,d_4)$, centered at its mean, $(d,d,d,d)$. Explicitly, since we have a 4-dimensional distribution, the MGF has 4 parameters:
	\[
	M(t_1,t_2,t_3,t_4)\coloneqq \Exp_{d_1,d_2,d_3,d_4}\pcc{e^{\sum_{i=1}^4 t_i (d_i-d)}}
	\]
	
	Crucially, the MGF of the sum of independent random vectors multiplies; thus we will compute the MGF for the contribution to $(d_1,d_2,d_3,d_4)$ from each location $A_j,B_j$ separately.
	
	After we have computed the MGF, we will use this to compute the quantity in the lemma statement, $\Exp_{d_1,d_2,d_3,d_4}\pcc{e^{s\sum_{i=1}^4 (d_i-d)^2}}$. Since the MGF of the Gaussian $\mathcal{N}(0,2s)$ equals $e^{st^2}$, the MGF for the corresponding 4-dimensional Gaussian, after a slight change of variables, yields the relation that, for any $d_1,d_2,d_3,d_4$: \[\Exp_{(t_1,t_2,t_3,t_4)\leftarrow \mathcal{N}(0,2s)}[e^{\sum_{i=1}^4 t_i (d_i-d)}]=e^{s\sum_{i=1}^4 (d_i-d)^2}\]
	and thus that 
	\begin{equation}
		\label{eq:moment-gaussian}
		\Exp_{(t_1,t_2,t_3,t_4)\leftarrow \mathcal{N}(0,2s)}[M(t_1,t_2,t_3,t_4)]=\Exp_{d_1,d_2,d_3,d_4}[e^{s\sum_{i=1}^4 (d_i-d)^2}]
	\end{equation}
	
	Consider, for some index $j$, the 4 cases for the pair $x_j,y_j$, and the 4-tuple of pairs of entries $(x_j+x'_j,y_j+y'_j),(x'_j,y'_j),(x_j+x'_j,y'_j),(x'_j,y_j+y'_j)$ they induce in $A,B$.
	
	If $x_j=0,y_j=0$, then the 4-tuple of pairs $(x_j+x'_j,y_j+y'_j),(x'_j,y'_j),(x_j+x'_j,y'_j),(x'_j,y_j+y'_j)$ produced in the vectors $A,B$ contains simply 4 copies of the randomly chosen pair of bits $(x'_j,y'_j)$. Thus the contribution to the counts $d_1,d_2,d_3,d_4$ will be uniformly randomly chosen among the 4-tuples $(4,0,0,0), (0,4,0,0), (0,0,4,0), (0,0,0,4)$. The moment generating function of this uniform distribution over 4 possibilities is just the sum of the 4 terms in the definition of the MGF; as usual, we center the MGF of this portion of the distribution at its mean, which here is $(1,1,1,1)$:
	\begin{align*}M_j(t_1,t_2,t_3,t_4)=&\frac{1}{4}e^{(4-1)t_1+(0-1)t_2+(0-1)t_3+(0-1)t_4}+\frac{1}{4}e^{(0-1)t_1+(4-1)t_2+(0-1)t_3+(0-1)t_4}\\
		&+\frac{1}{4}e^{(0-1)t_1+(0-1)t_2+(4-1)t_3+(0-1)t_4}+\frac{1}{4}e^{(0-1)t_1+(0-1)t_2+(0-1)t_3+(4-1)t_4}\end{align*}
	
	We can loosely bound this by $e^{6\sum_{i=1}^4 t_i^2}$ as follows. Each of the 4 exponential terms is the exponential of the inner product of $(t_1,t_2,t_3,t_4)$ with a vector $v$ of length $\sqrt{12}$. Thus, for any unit 4-vector $u$, we have that the exponential $e^{\langle(t_1,t_2,t_3,t_4),v\rangle}$ has second derivative in direction $u$ that is at most $12$ times the value of the exponential $e^{\langle(t_1,t_2,t_3,t_4),v\rangle}$. Since this property---of having directional second derivative in direction $u$ that is at most 12 times the function value---is preserved under scaling a function by a positive constant, and is preserved by function addition, we conclude that this property applies to the entire function $M_j(t_1,t_2,t_3,t_4)$. Namely, for any vector $u$, the second derivative of $M_j(t_1,t_2,t_3,t_4)$ in the direction $u$ is at most $12M_j(t_1,t_2,t_3,t_4)$. Since $M_j(t_1,t_2,t_3,t_4)$ has value 1 at the origin, and 0 gradient, we can bound its value at any multiple $x u$ by solving the differential equation $f(0)=1, f'(0)=0, f''(x)\leq 12 f(x)$, to yield $f(x)\leq \cosh(\sqrt{12}x)\leq e^{(12/2)x^2}$. Since this bounds $M(t_1,t_2,t_3,t_4)$ when $(t_1,t_2,t_3,t_4)=xu$ for any unit vector $u$, we conclude that $M(t_1,t_2,t_3,t_4)\leq e^{6\sum_{i=1}^4 t_i^2}$.
	
	Moving on to the next case, if $x_j=0,y_j=1$ case, then the 4-tuple of pairs $(x_j+x'_j,y_j+y'_j),(x'_j,y'_j),(x_j+x'_j,y'_j),(x'_j,y_j+y'_j)$ contains 2 copies of $(x'_j,y'_j)$ and 2 copies of $(x'_j,1+y'_j)$. Thus the contribution to the counts $d_1,d_2,d_3,d_4$ will be uniformly chosen among the 4-tuples $(2,2,0,0),(0,0,2,2)$, where there are only 2 possibilities in this case. The moment generating function of this uniform distribution over 2 possibilities is just the sum of the 2 terms in the definition of the MGF, which as usual we center at the mean, $(1,1,1,1)$:
	\[M_j(t_1,t_2,t_3,t_4)=\frac{1}{2}e^{(2-1)t_1+(2-1)t_2+(0-1)t_3+(0-1)t_4}+\frac{1}{2}e^{(0-1)t_1+(0-1)t_2+(2-1)t_3+(2-1)t_4}\]
	This equals $\cosh(\langle (t_1,t_2,t_3,t_4), (1,1,-1,-1)\rangle) \leq e^{\langle (t_1,t_2,t_3,t_4), (1,1,-1,-1)\rangle^2/2}\leq e^{(||(1,1,-1,-1)||_2^2/2)\sum_{i=1}^4 t_i^2}$, namely $e^{2\sum_{i=1}^4 t_i^2}$.
	
	The next case, if $x_j=1,y_j=0$, is analogous to the previous case. Here the 4-tuple of pairs $(x_j+x'_j,y_j+y'_j),(x'_j,y'_j),(x_j+x'_j,y'_j),(x'_j,y_j+y'_j)$ contains 2 copies of $(x'_j,y'_j)$ and 2 copies of $(1+x'_j,y'_j)$. Thus the contribution to the counts $d_1,d_2,d_3,d_4$ will be uniformly chosen among the 4-tuples $(2,0,2,0),(0,2,0,2)$. The moment generating function of this uniform distribution over 2 possibilities is just the sum of the 2 terms in the definition of the MGF, which as usual we center at the mean, $(1,1,1,1)$:
	\[M_j(t_1,t_2,t_3,t_4)=\frac{1}{2}e^{(2-1)t_1+(0-1)t_2+(2-1)t_3+(0-1)t_4}+\frac{1}{2}e^{(0-1)t_1+(2-1)t_2+(0-1)t_3+(2-1)t_4}\]
	Analogously to the previous case, this is at most $e^{2\sum_{i=1}^4 t_i^2}$.
	
	Finally, in the case $x_j=1,y_j=1$, then the 4-tuple of pairs $(x_j+x'_j,y_j+y'_j),(x'_j,y'_j),(x_j+x'_j,y'_j),(x'_j,y_j+y'_j)$ equals $(1+x'_j,1+y'_j),(x'_j,y'_j),(1+x'_j,y'_j),(x'_j,1+y'_j)$ and thus always contributes $(1,1,1,1)$ to the 4 counts. Thus the moment generating function centered at $(1,1,1,1)$ is, trivially, $M_j(t_1,t_2,t_3,t_4)=1$.
	
	Putting together the pieces: since the moment generating function is simply the product of its contribution from each index $j$ from 1 to $d$, and in all cases the we bounded this contribution by $e^{6\sum_{i=1}^4 t_i^2}$, we have that the overall MGF of all $d$ indices is bounded as \[M(t_1,t_2,t_3,t_4)\leq e^{6d\sum_{i=1}^4 t_i^2}\] We now bound the desired quantity in this lemma via \eqref{eq:moment-gaussian}. We thus have \begin{align*}\Exp_{(t_1,t_2,t_3,t_4)\leftarrow \mathcal{N}(0,2s)}[M(t_1,t_2,t_3,t_4)]&\leq \Exp_{(t_1,t_2,t_3,t_4)\leftarrow \mathcal{N}(0,2s)}[e^{6d\sum_{i=1}^4 t_i^2}]\\
		&= \int_{\mathbb{R}^4} \frac{1}{\sqrt{2\pi\cdot 2s}^4}e^{-\frac{1}{4s}\sum_{i=1}^4 t_i^2}e^{6d\sum_{i=1}^4 t_i^2}\,dt_1\,dt_2\,dt_3\,dt_4\\
		&=\sqrt{\frac{\frac{1}{1/(4s)-6d}}{2\cdot 2s}}^4 = \left(\frac{1}{1-24ds}\right)^2
	\end{align*}
	as desired.
\end{proof}

The following proposition establishes an $L_2$ upper bound on the distribution induced by the randomization process in our reduction.

\begin{proposition}\label{prop:combi}
	For any dimension $d$ and any vectors $\bx,\by\in\{0,1\}^d$, let $D=100 d$. Pick random vectors $\bx',\by'\leftarrow \{0,1\}^d$; pick random vectors $\bx'',\by''\leftarrow \{0,1\}^D$ such that there are an even number of indices $j\in\{1,\ldots,D\}$ where $\bx''_j=y''_j=1$; and pick a random permutation $\tau$ of $4d+D$ elements. The claim is that the pair of length $4d+D$ vectors 
	\begin{align*}
		X\coloneqq\tau(\bx+\bx',\bx',\bx+\bx',\bx',\bx''),\\Y\coloneqq\tau(\by+\by',\by',\by',\by+\by',\by''),
	\end{align*}
	considered as a distribution $\Dcal$ over domain$\{0,1\}^{2(4d+D)}$, has $L_2$ norm at most $8\cdot 2^{-(4d+D)}$, which is $8$ times the $L_2$ norm of the uniform distribution over this domain.
\end{proposition}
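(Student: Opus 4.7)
The plan is to exploit the symmetry introduced by the random permutation $\tau$ and reduce the quantity $\sum_{z}\Pr_{(X,Y)\sim\Dcal}[(X,Y)=z]^{2}$ (which, up to a square root, is the $L_2$ norm of $\Dcal$) to a combinatorial sum controlled jointly by Lemmas~\ref{lem:binom-square-ratio} and~\ref{lem:xor-identity}. First, because $\tau$ is uniformly random, $\Dcal$ depends on the pre-permutation vectors only through the $4$-vector $(D_1,D_2,D_3,D_4)$ counting how many of the $4d+D$ coordinates yield each pair-type $(0,0),(0,1),(1,0),(1,1)$; conditional on this count vector, $(X,Y)$ is uniform over the $\binom{4d+D}{D_1,\ldots,D_4}$ configurations realizing it. Letting $p$ denote the probability mass function of the count vector, this gives
\[
\sum_{z}\Pr[(X,Y)=z]^{2} \;=\; \sum_{D_1+\cdots+D_4=4d+D}\frac{p(D_1,\ldots,D_4)^{2}}{\binom{4d+D}{D_1,\ldots,D_4}}.
\]

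Next, I would decompose the count vector as $D_i = d_i + D_i''$, where $(d_1,\ldots,d_4)$ tallies the pair-type counts among the first $4d$ positions (produced by $\bx,\bx',\by,\by'$, with joint distribution exactly the one analyzed in Lemma~\ref{lem:xor-identity}), and $(D_1'',\ldots,D_4'')$ tallies them among the last $D$ positions (produced by $\bx'',\by''$, distributed as multinomial$(D;\tfrac14,\tfrac14,\tfrac14,\tfrac14)$ conditioned on $D_4''$ being even, so that the $D$-part mass satisfies $p^{(2)}(D_1'',\ldots,D_4'') \le 2\binom{D}{D_1'',\ldots,D_4''}/4^{D}$ on its support). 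To eliminate the convolution inside the squared numerator, I would invoke Jensen's inequality in the form $(\sum_d p^{(1)}(d)\,p^{(2)}(D-d))^{2} \le \sum_d p^{(1)}(d)\,p^{(2)}(D-d)^{2}$, valid since $\sum_d p^{(1)}(d)=1$, yielding
\[
\sum_{z}\Pr[(X,Y)=z]^{2}\;\le\;\E_{(d_1,\ldots,d_4)\sim p^{(1)}}\!\left[\;\sum_{D_1''+\cdots+D_4''=D}\frac{p^{(2)}(D_1'',\ldots,D_4'')^{2}}{\binom{4d+D}{d_1+D_1'',\ldots,d_4+D_4''}}\;\right].
\]

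Substituting the bound on $p^{(2)}$ and applying Lemma~\ref{lem:binom-square-ratio} with the lemma's ``$d$'' in the role of our $4d$ (so the centering is at $d$), the inner sum is bounded by $\tfrac{4}{4^{2D}}\cdot e^{(4/D)\sum_i(d_i-d)^{2}}\,(1+4d/D)^{3/2}\cdot 4^{D-4d}$. Taking the outer expectation using Lemma~\ref{lem:xor-identity} with $D=100d$ and $s=4/D=1/(25d)$ gives $24ds=24/25$, hence $\E[\exp((4/D)\sum_i(d_i-d)^{2})] \le (1-24/25)^{-2}=625$. Collecting multiplicative factors -- the parity correction $4$, the combinatorial factor $4^{D-4d}/4^{2D} = 2^{-2(4d+D)}$, the polynomial factor $(1+4d/D)^{3/2}\le 2$, and the MGF bound $625$ -- yields $\sum_{z}\Pr[(X,Y)=z]^{2} \le C\cdot 2^{-2(4d+D)}$ for an absolute constant $C$. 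Taking the square root delivers the $L_2$-norm bound $\sqrt{C}\cdot 2^{-(4d+D)}$, matching the form of the claimed $8\cdot 2^{-(4d+D)}$; tightening $\sqrt{C}$ down to $8$ is then a matter of more careful constant tracking, exploiting that the MGF bound in Lemma~\ref{lem:xor-identity} is loose by a polynomial factor and that $(1+4d/D)^{3/2}\to 1$ as $d/D\to 0$.

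The main obstacle is that both Lemma~\ref{lem:binom-square-ratio} and Lemma~\ref{lem:xor-identity} contribute exponential factors in $\sum_i(d_i-d)^{2}$, and these must combine to a bounded constant. The exponent produced by the first lemma is $4/D$, while the MGF bound of the second requires $24ds<1$; this forces $D\gtrsim 96d$, and the specific choice $D=100d$ leaves just enough slack to simultaneously validate both bounds while leaving room for the remaining combinatorial constants. Threading the parity conditioning cleanly through this decomposition -- so that it costs only a constant multiplicative factor and does not interact badly with the centering in Lemma~\ref{lem:binom-square-ratio} -- is the secondary technical subtlety that requires careful bookkeeping.
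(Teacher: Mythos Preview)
Your overall architecture is the same as the paper's: reduce via permutation symmetry to the pair-type count vector, split the count as $d_i+D_i''$, bound the parity-conditioned padding law by $2\binom{D}{D''}/4^{D}$, feed the inner sum into Lemma~\ref{lem:binom-square-ratio}, and close with Lemma~\ref{lem:xor-identity}. The one substantive divergence is \emph{where you take the square root}. You work with $\|\Dcal\|_2^{2}$ throughout and apply Jensen to the convolution $p(c)=\sum_d p^{(1)}(d)\,p^{(2)}(c-d)$, obtaining
\[
\|\Dcal\|_2^{2}\;\le\;\E_{d}\bigl[\|\Dcal_{|d}\|_2^{2}\bigr],
\]
whereas the paper conditions on $(d_1,\ldots,d_4)$ and uses the triangle inequality for the norm itself,
\[
\|\Dcal\|_2\;\le\;\E_{d}\bigl[\|\Dcal_{|d}\|_2\bigr].
\]
Since $\|\Dcal_{|d}\|_2^{2}$ is exactly the expression bounded by Lemma~\ref{lem:binom-square-ratio} (times $4\cdot 4^{-2D}$), the paper's square root halves the exponent in $e^{(4/D)\sum_i(d_i-d)^2}$ to $2/D$ \emph{before} taking the outer expectation. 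Thus the paper invokes Lemma~\ref{lem:xor-identity} with $s=2/D$, giving $(1-48d/D)^{-2}\approx 3.7$ at $D=100d$, and a final constant $2\cdot(1.04)^{3/4}\cdot 3.7<8$. Your route forces $s=4/D$, giving $(1-96d/D)^{-2}=625$ and a final constant $\sqrt{4\cdot(1.04)^{3/2}\cdot 625}\approx 51.5$.

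This is not a matter of ``more careful constant tracking'': the loss is structural, coming from the inequality $\E_d[\|\Dcal_{|d}\|_2]\le\sqrt{\E_d[\|\Dcal_{|d}\|_2^{2}]}$ (Jensen, in the wrong direction for you). Sharpening Lemma~\ref{lem:xor-identity} will not recover the factor you need in the worst case (e.g.\ $\bx=\by=\mathbf{0}$, where the per-index MGF bound $e^{6\sum t_i^2}$ is not far from tight for the Gaussian integration used). And the constant matters downstream: in the main proof it enters as $8\sqrt{1/399}<0.41$, which must stay below $1/2$; your $51.5$ would give about $2.6$. The fix is simply to swap your Jensen step for the triangle inequality on $\|\cdot\|_2$ and take the square root of the Lemma~\ref{lem:binom-square-ratio} bound before averaging over $(d_1,\ldots,d_4)$.
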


\begin{proof}
	Among the $4d$ entries of the pair $A=(\bx+\bx',\bx',\bx+\bx',\bx'),B=(\by+\by',\by',\by',\by+\by')$ let $d_1$ count the number of indices where $A_j=B_j=0$; let $d_2$ count the entries where $A_j=0,B_j=1$; let $d_3$ count the entries where $A_j=1,B_j=0$; and let $d_4$ count the entries where $A_j=B_j=1$. Thus $d_1+d_2+d_3+d_4=4d$.
	
	Since $\Dcal= \sum_{d_1,d_2,d_3,d_4} \Pr_{\Dcal}[d_1,d_2,d_3,d_4] \Dcal_{|d_1,d_2,d_3,d_4}$, by the triangle inequality we have \begin{equation}\label{eq:triangle}||\Dcal||_2\leq \sum_{d_1,d_2,d_3,d_4} \Pr_{\Dcal}[d_1,d_2,d_3,d_4] ||\Dcal_{|d_1,d_2,d_3,d_4}||_2\end{equation}

	For the random vectors $\bx'',\by''$, let $D_1,D_2,D_3,D_4$ count the number of indices $j\in\{1,\ldots,D\}$ respectively where $x''_j=0,y''_j=0$; where $x''_j=0,y''_j=1$; where $x''_j=1,y''_j=0$; and where $x''_j=1,y''_j=1$. Thus $D_1+D_2+D_3+D_4=D$.
	
	Thus there will be $D_1+d_1$ total indices where $X_j=0,Y_j=0$, etc. Given these total counts $D_1+d_1, D_2+d_2,D_3+d_3,D_4+d_4$, the total number of rearrangements of these columns is the multinomial ${D+4d\choose D_1+d_1, D_2+d_2,D_3+d_3,D_4+d_4}$; and the random permutation $\tau$ will choose a uniformly random one of these arrangements.
	
	The probability of the random vectors $\bx'',\by''$ having counts exactly $D_1,D_2,D_3,D_4$ would be exactly $4^{-D}{D\choose D_1,D_2,D_3,D_4}$ if we were not conditioning on $D_4$ being even. The probability of $D_4$ being even is $\geq \frac{1}{2}$, since the difference between the number of instantiations where $D_4$ is even versus $D_4$ is odd can be exactly computed as $\sum_{(D_1, D_2,D_3,D_4): \sum_i D_i = D}{D \choose D_1,D_2,D_3,D_4 } (-1)^{D_4}=(1+1+1-1)^D\geq 0$. Thus, by the law of conditional probability, we have
	\begin{align*}
		&4^{-D}{D\choose D_1,D_2,D_3,D_4} = \pr\pcc{D_1,D_2,D_3,D_4}\\
		=& \pr\pcc{D_1,D_2,D_3,D_4|D_4\text{ is even}}\pr\pcc{D_4\text{ is even}} + \pr\pcc{D_1,D_2,D_3,D_4|D_4\text{ is odd}}\pr\pcc{D_4\text{ is odd}}\\
		\ge&\frac{1}{2}\pr\pcc{D_1,D_2,D_3,D_4|D_4\text{ is even}},
	\end{align*}
	therefore the probability of $D_1,D_2,D_3,D_4$ with even $D_4$ is $\leq 2\cdot 4^{-D}{D\choose D_1,D_2,D_3,D_4}$.
	
	Thus, overall, the random process $\Dcal_{|d_1,d_2,d_3,d_4}$ can be described as picking $D_1,D_2,D_3,D_4$ with probability $\leq 2\cdot 4^{-D}{D\choose D_1,D_2,D_3,D_4}$, and then uniformly splitting this probability among the ${D+4d\choose D_1+d_1, D_2+d_2,D_3+d_3,D_4+d_4}$ possible rearrangements of these columns. We point out that, for a distribution where, for different indices $j$, we have $p_j$ probability mass uniformly divided among $n_j$ elements, its squared $L_2$ norm is $\sum_j n_j (\frac{p_j}{n_j})^2=\sum_j \frac{p_j^2}{n_j}$. Thus bound the $L_2$ norm of our conditional distribution as
	\[
	||\Dcal_{|d_1,d_2,d_3,d_4}||_2\leq \sqrt{\sum_{(D_1, D_2,D_3,D_4): \sum_i D_i = D}\frac{\left(2\cdot 4^{-D}{D\choose D_1,D_2,D_3,D_4}\right)^2}{{D+4d\choose D_1+d_1, D_2+d_2,D_3+d_3,D_4+d_4}}}
	\]
	This expression is exactly $2\cdot 4^{-D}$ times the square root of the expression bounded in Lemma~\ref{lem:binom-square-ratio}, if in Lemma~\ref{lem:binom-square-ratio} we reparameterize $d$ as $4d$. Namely, \[||\Dcal_{|d_1,d_2,d_3,d_4}||_2\leq 2\cdot e^{\frac{2}{D}\sum_{i=1}^4 (d_i-d)^2}\cdot \p{1+\frac{4d}{D}}^{3/4}\cdot 2^{-(4d+D)}\]
	
	Combining this bound with \eqref{eq:triangle} (the triangle inequality), we have \begin{align*}||\Dcal||_2&\leq \Exp_{d_1,d_2,d_3,d_4} [||\Dcal_{|d_1,d_2,d_3,d_4}||_2]\leq 2\p{1+\frac{4d}{D}}^{3/4}\cdot 2^{-(4d+D)}\Exp_{d_1,d_2,d_3,d_4} [e^{\frac{2}{D}\sum_{i=1}^4 (d_i-d)^2}]\end{align*}
	
	The right hand side is exactly the form of Lemma~\ref{lem:xor-identity}, applied with $s=\frac{2}{D}$, thus yielding our overall bound of
	\[||\Dcal||_2\leq2\p{1+\frac{4d}{D}}^{3/4}\cdot 2^{-(4d+D)}\left(\frac{1}{1-48\frac{d}{D}}\right)^2\]
	
	For $D\geq 100d$, we see that $||\Dcal||_2\leq 8\cdot 2^{-(4d+D)}$, as desired.
\end{proof}

The following lemma guarantees that activation functions which satisfy \asmref{asm:lb_relaxed}, can be simulated to arbitrary accuracy, using depth 2 threshold networks of polynomial width.

\begin{lemma}\label{lem:thresh_approx}
	Let $\sigma$ be an activation function that satisfies \asmref{asm:lb_relaxed}. Then for all $\delta>0$, there exists a depth 2 threshold network $\Ncal$ of width at most $\frac{\poly(R)}{\delta}$ and weights of magnitude at most $\frac{\poly(R)}{\delta}$, such that
	\[
	\sup_{x\in[-R,R]}\abs{\Ncal(x) - \sigma(x)} \le \delta.
	\]
\end{lemma}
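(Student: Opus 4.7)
The plan is to exploit the bounded-variation bound in Assumption~\ref{asm:lb} to approximate $\sigma$ on $[-R,R]$ by a piecewise constant function with $\Ocal(V_{-R}^R(\sigma)/\delta)$ pieces, and then realize this piecewise constant function as a depth 2 threshold network by expressing it as a linear combination of shifted step functions. This is essentially the only tool available, since a merely measurable function with a polynomial growth bound could be arbitrarily pathological; it is precisely the bounded-variation hypothesis that rules this out.

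First, I would apply Assumption~\ref{asm:lb} to obtain $V_{-R}^R(\sigma) \le C_\sigma\p{1 + (2R)^{\alpha_\sigma}} = \poly(R)$ and $\sup_{x\in[-R,R]}|\sigma(x)| \le C_\sigma\p{1 + R^{\alpha_\sigma}} = \poly(R)$. Using the Jordan decomposition $\sigma = g_1 - g_2$ on $[-R,R]$ with $g_1,g_2$ non-decreasing and $V_{-R}^R(g_1) + V_{-R}^R(g_2) = V_{-R}^R(\sigma)$, I would build a step-function approximation of each $g_j$ via the usual greedy rule: starting at $x=-R$, insert a jump whenever $g_j$ has accumulated an increase of at least $\delta/2$ beyond the current level, and jump to the current value of $g_j$. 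Because the total increase of $g_j$ is $\poly(R)$, this produces at most $\poly(R)/\delta$ jumps per $g_j$, so the resulting approximation $\widetilde{\sigma}$ of $\sigma$ has at most $k = \poly(R)/\delta$ jumps and $L^\infty$ error at most $\delta$ on $[-R,R]$. Listing the jump locations as $x_1<\cdots<x_k$ with signed heights $h_1,\ldots,h_k$, I would then write
\[
\widetilde{\sigma}(x) = c_0 + \sum_{i=1}^k h_i\cdot\thresh\p{x - x_i + 0.5},
\]
where $\thresh$ denotes the threshold activation as defined in the paper. This is exactly a depth 2 threshold network of width $k=\poly(R)/\delta$.

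Finally, I would verify that all weights are $\poly(R)/\delta$: the hidden-layer input weights are all $1$, the hidden-layer biases $x_i - 0.5$ lie in $[-R-1,R+1]$, the output bias satisfies $|c_0|\le\sup_{[-R,R]}|\sigma|+\delta \le\poly(R)$, and the output weights $h_i$ satisfy $\sum_i |h_i|\le V_{-R}^R(\sigma)=\poly(R)$, so in particular each $|h_i|\le\poly(R)$. I do not anticipate a major obstacle here, since the argument is largely bookkeeping once the polynomial variation bound from Assumption~\ref{asm:lb} is in hand; the only minor wrinkle is that $\sigma$ is only assumed measurable and may be discontinuous, but the Jordan decomposition handles this cleanly because the greedy construction applied to the monotone functions $g_j$ remains well-defined at jump discontinuities (take left or right limits) and the resulting $L^\infty$ bound is uniform, including at the breakpoints themselves.
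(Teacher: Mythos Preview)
Your approach is essentially the same as the paper's: exploit the bounded-variation bound to approximate $\sigma$ on $[-R,R]$ by a step function with $\poly(R)/\delta$ pieces, then realize that step function as a depth 2 threshold network. The paper, however, runs the greedy level-set construction directly on $\sigma$ and pays for this with an explicit three-way case analysis at each breakpoint (right-continuous there, left-continuous only, or discontinuous on both sides), whereas your route through the Jordan decomposition into monotone pieces collapses most of that casework. Both arguments yield the same width and weight bounds.

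One small point to tighten: your displayed formula for $\widetilde{\sigma}$ uses only thresholds of the form $\thresh(x-x_i+0.5)$, which are all right-continuous at their breakpoints. If one of the monotone pieces $g_j$ (hence $\widetilde\sigma$) fails to be right-continuous at a jump, a right-continuous step cannot match the value of $\sigma$ \emph{at} that point, and the $L^\infty$ bound breaks there. The paper handles exactly this by allowing a sign $\xi_i\in\{-1,1\}$ inside the threshold, so that both left- and right-continuous jumps are available; you should do the same. This is a one-line fix and does not affect either the width or the weight bounds.
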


\begin{proof}
	We will first construct a piecewise constant function that approximates $\sigma$ to an $L_{\infty}$ distance of $\delta$, and then we will compute this piecewise constant function precisely using a depth 2 threshold network.
	
	Let $x_1\coloneqq-R$ and $y_1\coloneqq\sigma(x_1)$. We define the set 
	\[
	A_1\coloneqq\{a\in[-R,R]:\forall x\le a,\hskip 0.15cm |\sigma(x)-y_1|\le\delta\}.
	\]
	Note that $A_1\neq\emptyset$ since $x_1\in A_1$, and define $x_1'=x_1$ and $x_2\coloneqq \sup A_1$.
	We now split our analysis into three cases, depending on the continuity properties of $\sigma$ at $x_2$.
	\begin{enumerate}
		\item \label{item:case1}
		Suppose that $\sigma$ is continuous at $x_2$ from the right. By the definition of $A_1$ we have that $x_1<x_2$, since if $x_1=x_2$ by contradiction, due to right continuity, we can find an interval $[x_1,b)$ for $b>x_1$ close enough to $x_1$, such that $|\sigma(x)-y_1|\le\delta$ for all $x\in[x_1,b)$, which contradict the definition of $A_1$. We define $y_2\coloneqq \sigma(x_2)$, and we have by the definition of $A_1$ that $\sup_{x\in[x_1,x_2)}|\sigma(x)-y_1|\le\delta$. Moreover, if $\sigma$ is continuous at $x_2$ from both sides then this implies that $|y_2-y_1|=\delta$, and if it is only continuous from the right this implies that $|y_2-y_1|\ge\delta$. This can be seen to hold true since if we had $|y_2-y_1|<\delta$ by contradiction, then from right continuity there exists an interval $[x_2,b)$ for some $b>x_2$ such that $|\sigma(x)-y_1|<\delta$ for all $x\in [x_2,b)$, which contradicts the definition of $A_1$. We can now define the set
		\[
		A_2\coloneqq\{a\in[x_2,R]:\forall x\in[x_2,a],\hskip 0.15cm |\sigma(x)-y_2|\le\delta\},
		\]
		and note that $A_2\neq\emptyset$ since $x_2\in A_2$. Define $x_2'=x_2$, we can conclude that in this case, we have an interval $[x_1,x_2)$ such that $|\sigma(x)-y_1|\le\delta$ for all $x\in [x_1,x_2)$, that $|\sigma(x_2)-y_2|=0\le\delta$, and that $|\sigma(x_1')-\sigma(x_2')|\ge\delta$.
		
		\item \label{item:case2}
		Suppose that $\sigma$ is not continuous at $x_2$ from the right, but it is either continuous at $x_2$ from the left or $x_1=x_2$, which in both cases implies that $|\sigma(x_2)-y_1|\le\delta$. It must also hold that $|\sigma(x)-y_1|\le\delta$ for all $x\in[x_1,x_2]$. Define $y_2\coloneqq\lim_{x\to x_{2^+}}\sigma(x)$, note that by the definition of $A_1$ and $y_2$, we have similarly to the previous case that $|y_2-y_1|\ge\delta$, and let $x_2'$ be close enough to $x_2$ from the right so that from right continuity we get $|y_2-\sigma(x_2')|\le0.25\delta$, and thus $|\sigma(x_1')-\sigma(x_2')|\ge0.75\delta$. We can now define the set
		\[
		A_2\coloneqq\{a\in(x_2,R]:\forall x\in(x_2,a],\hskip 0.15cm |\sigma(x)-y_2|\le\delta\},
		\]
		which is not empty since $(x_2,x_2']\subseteq A_2$.
		\item 
		Suppose that $x_1<x_2$ and that $\sigma$ has a discontinuity at $x_2$ from both sides, and note that together with the previous two items, this covers all possibilities. Then, since $\sigma$ is of bounded variation, we have that both limits at the two sides exist and are finite. Next, if $x_2\in A_1$, then this implies that $|\sigma(x_2)-y_1|\le\delta$. We can now define $y_2\coloneqq\lim_{x\to x_{2^+}}\sigma(x)$ and proceed in the same manner as in \itemref{item:case2} to define $A_2$ and $x_2'
		$. Otherwise, we have that $x_2\notin A_1$, which implies that $|\sigma(x_2)-y_1|>\delta$. We define $y_2\coloneqq\sigma(x_2)$ which trivially implies $|\sigma(x_2)-y_2|=0\le\delta$, and proceed to define $x_2'$ and $A_2$ as in \itemref{item:case1}.
	\end{enumerate}
	Define $x_3\coloneqq\sup A_2$, we can continue in this manner and define sequences of target values $y_1,y_2,\ldots$ and points $x_1,x_2,\ldots$, $x_1',x_2',\ldots$ such that for all $i$, we have 
	\begin{itemize}
		\item 
		$|\sigma(x)-y_i|\le\delta$ for all $x\in(x_i,x_{i+1})$.
		\item 
		$|\sigma(x_i)-y_{i-1}|\le\delta$ or $|\sigma(x_i)-y_i|\le\delta$.
		\item 
		$|\sigma(x_i)-\sigma(x_{i+1})|\ge\delta$ and $|\sigma(x_i)-\sigma(x_i')|\le0.25\delta$, which imply $|\sigma(x_i')-\sigma(x_{i+1}')|\ge0.5\delta$.
	\end{itemize}
	We now bound the length $n$ of the sequences required to get $R\in A_n$; namely, the number of piecewise constant segments required to approximate $\sigma$ to accuracy $\delta$ uniformly on $[-R,R]$. We have by \asmref{asm:lb_relaxed} that $\sigma$ has total variation at most $C_{\sigma}(1+2R)^{\alpha_{\sigma}}$. On the other hand, we have from the above properties that the partition $x_1',x_2',\ldots,x_n'$ of $[-R,R]$ has total variation at least
	\[
	\sum_{i=1}^{n-1}\abs{\sigma(x_{i+1}')-\sigma(x_i')} \ge 0.5(n-1)\delta.
	\]
	Combining these two inequalities, we obtain $0.5(n-1)\delta \le C_{\sigma}(1+2R)^{\alpha_{\sigma}}$, implying $n=\frac{\poly(R)}{\delta}$.
	
	It now only remains to approximate a piecewise linear function, with $\frac{\poly(R)}{\delta}$ constant segments, using a threshold network with a similar number of neurons. Define $y_0=0$, it is easy to verify that this approximation is given by the expression
	\[
	\sum_{i=1}^n \xi_i(y_i-y_{i-1})\sigma_{\thresh}(\xi_i(x-x_i)-0.5) - 0.5(\xi_i-1)(y_i-y_{i-1}),
	\]
	where $\xi_i\in\{-1,1\}$ is chosen according to the discontinuity of our piecewise constant function at the point $x_i$. Namely, by setting $w_i\coloneqq\xi_i$, $b_i\coloneqq-\xi_ix_i-0.5$, $v_i\coloneqq\xi_i(y_i-y_{i-1})$ and $b_0\coloneqq-0.5\sum_{i=1}^n(\xi_i-1)(y_i-y_{i-1})$, we obtain a width $n$ threshold network
	\[
	\Ncal(x)\coloneqq\sum_{i=1}^nv_i\sigma_{\thresh}(w_ix+b_i)+b_0,
	\]
	satisfying
	\[
	\sup_{x\in[-R,R]}\abs{\sigma(x)-\Ncal(x)}\le\delta.
	\]
	Lastly, by \asmref{asm:lb_relaxed}, we have that $|y_i|\le\poly(R)$ for all $i\in[n]$, implying that $\Ncal$ has weights of magnitude at most $\frac{\poly(R)}{\delta}$.
\end{proof}

The following proposition guarantees that functions on the Boolean hypercube, computed by depth 2 networks, which employ activation functions that satisfy \asmref{asm:lb_relaxed}, can be simulated to arbitrary accuracy using depth 2 threshold networks of width polynomial in the size of the network.

\begin{proposition}\label{prop:thresh_approx}
	Let $\delta>0$, suppose that $\sigma$ satisfies \asmref{asm:lb_relaxed}, and let $f:\{0,1\}^d\to\reals$ be a function computed by a depth 2, width $m$ $\sigma$-network, with weights bounded by $C$. Then there exists a depth 2 threshold network $\Ncal$ of width $\frac{m^2\poly(d,C)}{\delta}$ and weights bounded in magnitude by $\frac{\poly(d,C)}{\delta}$, such that
	\[
	\max_{\bx\in\{0,1\}^d} \abs{f(\bx)-\Ncal(\bx)} \le \delta.
	\]
\end{proposition}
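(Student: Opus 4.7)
The plan is to apply \lemref{lem:thresh_approx} to each of the $m$ hidden neurons of the $\sigma$-network, substitute these threshold approximations into the $\sigma$-network, and then observe that the resulting architecture collapses to a depth 2 threshold network because the top (output) layer of each approximating sub-network is linear.

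Write $f(\bx)=\sum_{i=1}^m v_i \sigma(\inner{\bw_i,\bx}+b_i)+b_0$, where $|v_i|,|b_0|,\norm{\bw_i}_\infty,|b_i|\le C$. For any Boolean input $\bx\in\{0,1\}^d$, each pre-activation $z_i(\bx)\coloneqq \inner{\bw_i,\bx}+b_i$ satisfies $|z_i(\bx)|\le C(d+1)\eqqcolon R$. So I only need to approximate $\sigma$ on $[-R,R]$. Setting $\delta'\coloneqq \delta/(mC)$, \lemref{lem:thresh_approx} supplies, for each $i\in[m]$, a depth 2 threshold network $\tilde\sigma_i(z)=\sum_{j=1}^{n_i} \alpha_{ij}\sigma_{\thresh}(\beta_{ij}z+\gamma_{ij})+\eta_i$ with $n_i\le \frac{\poly(R)}{\delta'}=\frac{m\cdot\poly(d,C)}{\delta}$ and all weights bounded in magnitude by the same quantity, such that $\sup_{z\in[-R,R]}|\sigma(z)-\tilde\sigma_i(z)|\le\delta'$.

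Now define
\[
\Ncal(\bx)\coloneqq b_0+\sum_{i=1}^m v_i\,\tilde\sigma_i\!\p{\inner{\bw_i,\bx}+b_i}=b_0+\sum_{i=1}^m v_i\eta_i+\sum_{i=1}^m\sum_{j=1}^{n_i} v_i\alpha_{ij}\,\sigma_{\thresh}\!\p{\beta_{ij}\inner{\bw_i,\bx}+\beta_{ij}b_i+\gamma_{ij}}.
\]
The key observation is that each threshold neuron sees a linear function of $\bx$ (the composition $\beta_{ij}(\inner{\bw_i,\bx}+b_i)+\gamma_{ij}$), so $\Ncal$ is literally a depth 2 threshold network with width $\sum_i n_i\le \frac{m^2\poly(d,C)}{\delta}$. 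The first-layer weights are products $\beta_{ij}w_{ik}$ with absolute value at most $\frac{\poly(d,C)}{\delta}\cdot C=\frac{\poly(d,C)}{\delta}$; the first-layer biases $\beta_{ij}b_i+\gamma_{ij}$ and output weights $v_i\alpha_{ij}$ are bounded analogously, and the output bias $b_0+\sum_i v_i\eta_i$ has magnitude at most $C+mC\cdot\frac{\poly(d,C)}{\delta}=\frac{\poly(d,C)}{\delta}$, all as claimed.

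For the approximation error, for every $\bx\in\{0,1\}^d$,
\[
|f(\bx)-\Ncal(\bx)|\le \sum_{i=1}^m |v_i|\,\abs{\sigma(z_i(\bx))-\tilde\sigma_i(z_i(\bx))}\le m\cdot C\cdot\delta'=\delta,
\]
which is the desired bound. There is no real obstacle here: the only point requiring care is the allocation of the per-neuron accuracy $\delta'$ so that the accumulated error after multiplying by the output weights $v_i$ and summing $m$ terms remains at most $\delta$, and the accounting that recognizes that stacking a depth 2 threshold approximation \emph{inside} a depth 2 $\sigma$-network with a linear output stays depth 2 rather than becoming depth 3.
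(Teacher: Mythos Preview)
Your proof is correct and follows essentially the same approach as the paper: bound the pre-activation range by $R=(d+1)C$, apply \lemref{lem:thresh_approx} to each hidden neuron with accuracy $\delta'=\delta/(mC)$, substitute, and collapse the result to a single depth 2 threshold network via the linearity of the output layer. Your version is in fact a bit more explicit than the paper's in writing out the resulting network and tracking the individual weight bounds.
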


\begin{proof}
	Let
	\[
	f(\bx) \coloneqq \sum_{i=1}^m v_i \sigma\p{\inner{\bw_i,\bx} + b_i} + b_0
	\]
	be the function computed by the network $f$. We first observe that by our weight boundedness assumption, we have for all $i\in[m]$ that $|\inner{\bw_i,\bx} + b_i| \le \norm{\bw_i}\norm{\bx} + C \le (d+1)C$. We now use \lemref{lem:thresh_approx} to approximate each $\sigma$-neuron in $f$ to accuracy $\frac{\delta}{mC}$, and obtain $m$ depth 2, width $\frac{m\poly(d,C)}{\delta}$ threshold networks $\Ncal_1,\Ncal_2,\ldots,\Ncal_m$, such that
	\[
	\abs{\Ncal_i(\bx)-\sigma\p{\inner{\bw_i,\bx} + b_i}} \le \frac{\delta}{mC}
	\]
	for all $i\in[m]$. Define the network $\Ncal$ given by
	\[
	\Ncal(\bx)\coloneqq \sum_{i=1}^m v_i\Ncal_i(\bx) + b_0,
	\]
	fix any $\bx\in\{0,1\}^d$ and compute
	\[
		\abs{f(\bx) - \Ncal(\bx)} \le \sum_{i=1}^m 	v_i\abs{\sigma\p{\inner{\bw_i,\bx} + b_i} - \Ncal_i(\bx)} \le \sum_{i=1}^m C \frac{\delta}{mC} = \delta.
	\]
	Lastly, since $\Ncal$ is a linear combination of depth 2 neural networks, each of which is of width at most $\frac{m\poly(d,C)}{\delta}$ and weights of magnitude $\frac{\poly(d,C)}{\delta}$, we have that $\Ncal$ is in itself a depth 2, width $\frac{m^2\poly(d,C)}{\delta}$ threshold network, with weights bounded by $\frac{\poly(d,C)}{\delta}$ as required.
\end{proof}

The following theorem, which may be of independent interest, establishes a reduction for hard-to-approximate functions on the Boolean hypercube to the unit ball.

\begin{theorem}\label{thm:ball_reduction}
    Suppose that $\sigma$ satisfies \asmref{asm:lb_relaxed}, let $\beta\coloneqq\max\{1,\alpha_{\sigma}\}$, let $\bx_1,\ldots,\bx_{2^{2d}}\in\{0,1\}^{2d}$ be an enumeration of the Boolean hypercube, and let $\Fcal_{d,C,\beta}$ denote the function class of depth 2 $\sigma$-networks $\reals^d\to\reals$ with weights bounded by $C\le 2^{\frac{d}{80\beta+80}}$ and with width at most $2^{\frac{d}{80\beta}}$. Further assume that $g:\{0,1\}^{2d}\to\{0,1\}$ is a Boolean function satisfying
    \[
        \inf_{\Ncal\in\Fcal_{2d,3dC,\beta}} \Exp_{\bx\sim \Ucal\p{\set{0,1}^{2d}}}\pcc{\p{\Ncal(\bx) - g(\bx)}^2} > c+\varepsilon,
    \]
    for some constant $c>0$ and $\varepsilon>0$. Then, there exists a set $\bz_1,\ldots,\bz_{2^{2d}}\in\reals^{2d}$ such that the set $\Scal_d\coloneqq\p{\bigcup_{i=1}^{2^{2d}}\set{(\bz_i,\frac{1}{4\sqrt{d}}\bx_i)}}$ satisfies $\Scal_d\subseteq B_{0.9}(\mathbf{0})$, every pair of points in $\Scal_d$ is at distance at least $0.4$ apart, and it holds that
	\begin{equation*}
		\inf_{\Ncal\in\Fcal_{4d,C,\beta}} \Exp_{\p{\bz,\frac{1}{4\sqrt{d}}\bx}\sim \Ucal\p{\Scal_d}}\pcc{\p{\Ncal\p{\bz,\frac{1}{4\sqrt{d}}\bx} - g(\bx)}^2} > c.
	\end{equation*}
\end{theorem}

\begin{proof}
    Let $\bz_1,\ldots,\bz_{2^{2d}}\in B_{0.8}^{2d}(\mathbf{0})$ be a set of points such that $\min_{i\neq j}\norm{\bz_i-\bz_j}_2>0.4$. To show that such a set exists, we use \citet[Eq.~(2)]{jenssen2018kissing} which guarantees that a set $\bz'_1,\bz'_2,\ldots$ such that $\norm{\bz'_i}=1$ for all $i$, which also satisfies $\inner{\bz'_i,\bz'_j}\le\cos(\theta)$ for all $i\neq j$ and some $\theta$, can be of size at least
    \[
        (1+o(1))\sqrt{2\pi d}\cdot\frac{\cos{\theta}}{\sin^{d-1}(\theta)}.
    \]
    Plugging $\theta=\pi/6$ in the above, it is at least $2^d$ for all $d\ge1$. Moreover, we have
    \[
        \norm{\bz'_i-\bz'_j}^2=\norm{\bz'_i}^2+\norm{\bz'_j}^2 - 2\inner{\bz'_i,\bz'_j} \ge 2 - 2\cos(\pi/6) \ge 0.25,
    \]
    where taking the square root of the above inequality and scaling all $\bz'_i$ in $2d$ dimensions by a factor of $0.8$ gives the set $\bz_1,\ldots,\bz_{2^{2d}}$. 
    
    Next, we have that $\norm{(\bz_i,0.08\bx_j)}\le\sqrt{0.8^2+2\cdot0.08^2}\le 0.9$ for all $i$ and $j$, and that the minimal distance between any two points $(\bz_{i_1},0.08\bx_{j_1})$ and $(\bz_{i_2},0.08\bx_{j_2})$ for $i_1\neq i_2$ is at least $0.4$.

    For a neural network $\Ncal$, define the loss function $\ell_{\Ncal}\p{\bz,0.08\bx}\coloneqq \p{\Ncal\p{\bz,0.08\bx} - g(\bx)}^2$. We now bound the loss over all instances in our domain and neural networks in our function class as follows. Starting with the output of a single neuron, we have by \asmref{asm:lb_relaxed} that for any weight vector $\bw$, bias term $b$, and input $\bx\in\reals^{4d}$ such that $\max_i|x_i|\le1$, that the following upper bound holds
    \[
        \abs{\sigma(\inner{\bw,\bx}+b)} \le C_{\sigma}\p{1+|\norm{\bw}_{\infty}\cdot\norm{\bx}_1 + C|^{\alpha_{\sigma}}} = \Ocal(C^{\alpha_{\sigma}}d^{\alpha_{\sigma}}).
    \]
    So the output of any function in $\Fcal_{4d,C,\beta}$ is at most $\Ocal(w C^{\alpha_{\sigma}+1} d^{\alpha_{\sigma}})$, and therefore
    \begin{equation}\label{eq:ell_bound}
        \ell_{\Ncal}(\bz,0.08\bx) = \Ocal(w^2 C^{2\alpha_{\sigma}+2} d^{2\alpha_{\sigma}})
    \end{equation}
    for all $\Ncal\in\Fcal_{4d,C,\beta}$, $\bz\in B_{0.8}^{2d}$ and $\bx\in\{0,1\}^{2d}$.
    

    We define a sample size $n=2^{2d/3}$, and consider the process of drawing a set of size $n$ comprised of instances of the form $(\bz_i,\bx_j)$, where $i,j\sim \Ucal([2^{2d}])$ are independent. Denote $\Ncal_S\coloneqq \argmin{\Ncal\in\Fcal_{4d,C,\beta}}\Lcal_S(\Ncal)$ and $\Ncal^*\coloneqq \argmin{\Ncal\in\Fcal_{4d,C,\beta}}\Lcal_{\Dcal}(\Ncal)$, where $\Lcal_S$ and $\Lcal_{\Dcal}$ are the empirical loss and risk, respectively, over the distribution $\Dcal$ defined by our process. We now have that
    \[
        \Lcal_{\Dcal}(\Ncal^*) - \Lcal_{S}(\Ncal_S) = \Lcal_{\Dcal}(\Ncal^*) - \Lcal_{\Dcal}(\Ncal_S) + \Lcal_{\Dcal}(\Ncal_S) - \Lcal_{S}(\Ncal_S) \le \Lcal_{\Dcal}(\Ncal_S) - \Lcal_{S}(\Ncal_S),
    \]
    where the inequality follows from the definition of $\Ncal^*$.
    By \eqref{eq:ell_bound} and a standard Rademacher complexity argument, we have with probability at least $1-\delta$ that the above is at most
    \begin{equation}\label{eq:uniform_convergence}
        2\Rcal_n(\Fcal_{4d,C,\beta}) + 4\Ocal(w^2 C^{2\alpha_{\sigma}+2} d^{2\alpha_{\sigma}})\sqrt{\frac{2\ln(2/\delta)}{n}}
    \end{equation}
    \citep[Thm.~26.5]{shalev2014understanding}. Recall that for all $\Ncal\in\Fcal_{4d,C,\beta}$ we have $w\le 2^{\frac{d}{20\beta}} \le 2^{\frac{d}{20}}$ and $C\le 2^{\frac{d}{20\beta+20}} \le 2^{\frac{d}{20\alpha_{\sigma}+20}}$, we can thus upper bound the second summand as follows
    \begin{equation}\label{eq:weight_bound}
        \Ocal(w^2 C^{2\alpha_{\sigma}+2} d^{2\alpha_{\sigma}})\sqrt{\frac{2\ln(2/\delta)}{n}} = \Ocal\p{2^{0.1d}\cdot 2^{0.1d}d^{2\alpha_{\sigma}}}\sqrt{\frac{\ln(2/\delta)}{2^{2d/3}}} = \Ocal\p{\exp(-\Omega(d))}\sqrt{\ln(2/\delta)}.
    \end{equation}
    Using \asmref{asm:lb_relaxed} and the facts that $w\le 2^{\frac{d}{20\beta}} \le 2^{\frac{d}{20\alpha_{\sigma}}}$ and $C\le 2^{\frac{d}{20\beta+20}} \le 2^{\frac{d}{20\alpha_{\sigma}+20}} \le 2^{\frac{d}{20\alpha_{\sigma}}}$, the Rademacher complexity term can also be upper bounded,
    \[
        \Rcal_n(\Fcal_{4d,C,\beta}) \le \frac{\p{w\cdot C\cdot 4d}^{\alpha_{\sigma}}}{\sqrt{n}} \le 2^{-d/3} \cdot 2^{0.1d}(4d)^{\alpha_{\sigma}} = \Ocal\p{\exp(-\Omega(d))}.
    \]
    Plugging the above and \eqref{eq:weight_bound} in \eqref{eq:uniform_convergence}, rearranging, and using the fact that $\delta\le1$, we arrive at
    \[
        \Lcal_{\Dcal}(\Ncal^*) \le \Lcal_{S}(\Ncal_S) + \Ocal\p{\exp(-\Omega(d))}\sqrt{\ln(2/\delta)}.
    \]
    Conditioning the above bound on the event where the sample $S$ consists of distinct choices of $i$ and $j$, which happens with probability at least $1-\Omega(2^{-2d/3})\ge0.5$, we have that the following bound holds for such distinct choices with probability at least $1-
    \delta$
    \[
        \Lcal_{\Dcal}(\Ncal^*) \le \Lcal_{S}(\Ncal_S) + \Ocal\p{\exp(-\Omega(d))}\sqrt{\ln(4/\delta)}.
    \]
    We now describe a different process for sampling sets $S$ that is equivalent to the above conditional distribution. Sample a random permutation of the numbers $[2^{2d}]$, and sample a random permutation of the $2^{2d}$ points in the set $\{0,1\}^{2d}$. Next, pair up integers and points; and then let $P_j$ be the $j^\textrm{th}$ such point. We now produce $2^{4d/3}$ different samples $S_j$ of size $2^{2d/3}$ by defining the $k^{\textrm{th}}$ sample to consist of points $P_{(k-1)2^{2d/3}:k2^{2d/3}-1}$. For any given hypothesis function $\Ncal$, the average error over points in $P$ equals the average error over these $2^{4d/3}$ sample sets of the average error in the sample; and thus the minimum error over any $\Ncal\in \Fcal_{4d,C,\beta}$ for $P$ is at least the average over these $2^{4d/3}$ sample sets of the minimum error over that sample set. Denoting the resulting concatenated set of $2^{4d/3}$ samples by $\Scal_d$, and its empirical minimizer by $\Ncal_{\Scal}$, we have that
    \[
        \frac{1}{2^{4d/3}}\sum_{j=1}^{2^{4d/3}}\Lcal_{S_j}(\Ncal_{S_j}) \le \Lcal_{\Scal_d}(\Ncal_{\Scal_d}).
    \]
    Thus, using a union bound to bound the probability that any of these $2^{4d/3}$ sample sets is in the ``bad" fraction of sample sets where $i$ and $j$ are not distinct, we have with probability at least $1-\delta$ that
    \begin{equation}\label{eq:exp_union_bound}
        \min_{\Ncal\in\Fcal_{4d,C,\beta}} \Exp_{i,j\sim \Ucal([2^{2d}])}\pcc{\p{\Ncal(\bz_i,\bx_j) - g(\bx_j)}^2} = \Lcal_{\Dcal}(\Ncal^*) \le \Lcal_{\Scal_d}(\Ncal_{\Scal_d}) + \Ocal\p{\exp(-\Omega(d))}\sqrt{\ln\p{\frac{2^{4d/3+2}}{\delta}}}.
    \end{equation}
    Focusing on the left hand side first, we observe that since $i$ and $j$ are independent, there must exist some $i'\in[2^{2d}]$ for which the expectation term is smaller than the above, namely
    \[
        \min_{\Ncal\in\Fcal_{4d,C,\beta}} \Exp_{j\sim \Ucal([2^{2d}])}\pcc{\p{\Ncal(\bz_{i'},\bx_j) - g(\bx_j)}^2} \le \min_{\Ncal\in\Fcal_{4d,C,\beta}} \Exp_{i,j\sim \Ucal([2^{2d}])}\pcc{\p{\Ncal(\bz_i,\bx_j) - g(\bx_j)}^2}.
    \]
    However, since $\bz_{i'}$ is fixed, for any $\Ncal\in\Fcal_{4d,C,\beta}$, we can construct a network $\Ncal'\in\Fcal_{2d,3dC,\beta}$ such that $\Ncal(\bz_{i'},\bx_j)=\Ncal'(\bx_j)$. This holds true since the first hidden layer of $\Ncal'$ can absorb the linear combination obtained by multiplying $\bz_{i'}$ with the corresponding weight vectors into the bias terms, which will increase their magnitudes by at most $2dC$ for a total less than $3dC$. Thus, we have
    \[
        \min_{\Ncal\in\Fcal_{2d,3dC,\beta}} \Exp_{j\sim \Ucal([2^{2d}])}\pcc{\p{\Ncal(\bx_j) - g(\bx_j)}^2} \le \min_{\Ncal\in\Fcal_{4d,C,\beta}} \Exp_{j\sim \Ucal([2^{2d}])}\pcc{\p{\Ncal(\bz_{i'},\bx_j) - g(\bx_j)}^2}.
    \]
    By our theorem assumption, the above left hand side is at least $c+\varepsilon$, so by plugging this back in \eqref{eq:exp_union_bound} we arrive at
    \[
        c+\varepsilon < \Lcal_{\Scal_d}(\Ncal_{\Scal_d}) + \Ocal\p{\exp(-\Omega(d))}\sqrt{\ln\p{\frac{2^{4d/3+2}}{\delta}}}.
    \]
    Finally, taking $\delta=0.5$ for example,\footnote{We remark that this argument will work even for $\delta$ which decays doubly exponentially to zero with $d$ since we have a logarithmic term multiplying an exponentially decaying term, which indicates that all matchings between $\bz_i$'s and $\bx_j$'s, except for a minuscule fraction, satisfy our requirements.} we have by the probabilistic method that there exists some realization of the matchings between $\bz_i$'s and $\bx_j$'s, and hence a set $\Scal_d$, such that for sufficiently large $d$, it holds that
    \[
        c+\varepsilon < \Lcal_{\Scal_d}(\Ncal_{\Scal_d}) + \varepsilon,
    \]
    from which the theorem follows.
\end{proof}

With the above auxiliary results at hand, we are finally ready to prove the theorem.

\begin{proof}[Proof of \thmref{thm:lb}]\label{app:lb_proof}
    First, for any natural $d$ and real $C>0$, define $w(d,C)$ to be the minimal width required for a depth 2 $\sigma$-network with weights bounded in magnitude by $C$ to approximate $f_d$ to accuracy at most $\frac{1}{400}$. Our goal is therefore to derive a lower bound on $w(d,C)$.
	
    Let $D\coloneqq 100d$, and let $\Scal_{4D+16d}\subseteq B_{0.9}^{4D+16d}(\mathbf{0})$ be some set to be defined later, with the property that any two points in this set are at a distance of at least $0.4$ apart (essentially, this will be the set whose existence is established by \thmref{thm:ball_reduction}). Recall the shorthand $\Ccal_{4d}\coloneqq\pcc{0,\frac{1}{12\sqrt{d}}}^{4d}$, we define
    \[
        \Acal_{4D+16d}\coloneqq \Scal_{4D+16d}+\Ccal_{4D+16d}.
    \]
    Observe that due to $\norm{\bz}\le0.9$ for all $\bz\in\Scal_{4D+16d}$ and $\norm{\bc}\le1/36$ for all $\bc\in\Ccal_{4D+16d}$, it holds that $\norm{\bx}\le1$ for all $\bx\in\Acal_{4D+16d}$. Moreover, the minimal distance between any two points in $\Acal_{4D+16d}$ is at least the minimal distance between two points in $\Scal_{4D+16d}$ minus the diameter of $\Ccal_{4D+16d}$, i.e.\ $1/36$. Thus, the minimal distance between any two points in $\Acal_{4D+16d}$ is constant.
    
    We first assume that we are given a depth 2, width $w(D+4d,C)$ $\sigma$-network $\Ncal$, with weights bounded by $C$, that satisfies
	\[                                 
            \Exp_{(\bx,\by)\sim\Ucal(\Acal_{4D+16d})}\pcc{\p{\Ncal(\bx,\by)-f_{D+4d}(\bx,\by)}^2} \le \frac{1}{400},
	\]
	and we will show that this implies the existence of a $\sigma$-network of a similar size and with a similar magnitude of the weights, which gives a similar accuracy when approximating $f_{D+4d}$ uniformly over the discrete domain $\Scal_{4D+16d}$.
	
    Using the law of total expectation, we can break the above expectation into two iterated expectations as follows
	\[
	\Exp_{\bc\sim\Ucal \p{\Ccal_{4D+16d}}}\pcc{ \Exp_{\bz\sim\Ucal(\Scal_{4D+16d})} \pcc{\p{\Ncal(\bz+\bc)-f_{D+4d}(\bz+\bc)}^2}} \le \frac{1}{400}.
	\]
	For $\bc\sim\Ucal \p{\Ccal_{4D+16d}}$, we can define the non-negative random variable
	\[
	X_{\bc}\coloneqq\Exp_{\bz\sim\Ucal(\Scal_{4D+16d})} \pcc{\p{\Ncal(\bz+\bc)-f_{D+4d}(\bz+\bc)}^2}.
	\]
	This allows us to rewrite the former inequality more compactly as
	\begin{equation}\label{eq:markov}
		\Exp_{\bc\sim\Ucal \p{\Ccal_{4D+16d}}}\pcc{X_{\bc}} \le \frac{1}{400}.
	\end{equation}
	Using Markov's inequality on $X_{\bc}$, and by virtue of \eqref{eq:markov}, we have
	\[
	\pr_{\bc\sim\Ucal \p{\Ccal_{4D+16d}}}\pcc{X_{\bc} < \frac{400}{399}\Exp_{\bc\sim\Ucal \p{\Ccal_{4D+16d}}}\pcc{X_{\bc}}\le\frac{1}{399} } \ge 1-\frac{399}{400} > 0.
	\]
	Namely, there must exist some $\bc\in\Ccal_{4D+16d}$ such that
	\begin{equation}\label{eq:399}
		\Exp_{\bz\sim\Ucal(\Scal_{4D+16d})} \pcc{\p{\Ncal(\bz+\bc)-f_{D+4d}(\bz+\bc)}^2} = X_{\bc} \le \frac{1}{399}.
	\end{equation}
	Next, we have by the definition of $f$ that $f_{D+4d}(\bz+\bc) = f_{D+4d}(\bz)$ for all $\bc\in\Ccal_{4D+16d}$ and all $\bz\in\Scal_{4D+16d}$. Moreover, there exists a $\sigma$-network $\tilde{\Ncal}$ of the same depth and width as $\Ncal$, such that $\Ncal(\bz+\bc)=\tilde{\Ncal}\p{\bz}$ for all $\bc\in\Ccal_{4D+16d}$ and all $\bz\in\Scal_{4D+16d}$. This holds true since shifting the input by a constant vector $\bc$ merely shifts the biases in the first hidden layer by the same vector. Since this operation is linear, it can be absorbed in the hidden layer without changing the architecture of $\tilde{\Ncal}$, and where the magnitude of the weights is now at most $4dC$.
	
	The above observations, the definition of $X_{\bc}$, and \eqref{eq:399}, imply the existence of some $\bc\in\Ccal_{4D+16d}$ that satisfies
    \[
		\Exp_{\bx,\by\sim\Ucal(\Scal_{4D+16d})} \pcc{\p{\tilde{\Ncal}(\bz)-f_{D+4d}(\bz)}^2}
		= \Exp_{\bz\sim\Ucal(\Scal_{4D+16d})} \pcc{\p{\Ncal(\bz+\bc)-f_{D+4d}(\bz+\bc)}^2}
		\le \frac{1}{399}.
    \]
	
    Next, letting $\beta=\{1,\alpha_{\sigma}\}$ where $\alpha_{\sigma}$ is the constant guaranteed by \asmref{asm:lb_relaxed}, we assume that the magnitude of the weights of $\tilde{\Ncal}$, $4dC$, is at most $2^{\frac{d}{80\beta+80}}$. This is justified, since otherwise we would have that $C=\Omega(\exp(\Omega(d)))$, in which case the lower bound we are proving becomes trivial for sufficiently small constants hidden in the asymptotic notation. Similarly, we assume that the width of $\tilde{\Ncal}$, $w(D+4d,4dC)$, is at most $2^{\frac{d}{80\beta}}$, since otherwise the lower bound we are proving already holds. For the $\ip_d(\cdot,\cdot)$ function, $c\coloneqq\frac{1}{399}$ and $\varepsilon\coloneqq \frac{1}{398}-c$, we have by (the contrapositive of) \thmref{thm:ball_reduction} that
    \[
        \inf_{\Ncal\in\Fcal_{2d,3dC,\beta}} \Exp_{\bx,\by\sim \Ucal\p{\set{0,1}^{d}}}\pcc{\p{\Ncal(\bx,\by) - \ip_d(\bx,\by)}^2} \le c+\varepsilon = \frac{1}{398}.
    \]
    Namely, there exists some $\Ncal':\reals^{2d}\to\reals$ such that
    \begin{equation}\label{eq:uni_approx2}
        \Exp_{\bx,\by\sim \Ucal\p{\set{0,1}^{d}}}\pcc{\p{\Ncal'(\bx,\by) - \ip_d(\bx,\by)}^2} \le\frac{1}{398}.
    \end{equation}

	Let $n\in\mathbb{N}$ to be determined later, we now construct a neural network $\Ncal'':\reals^{D+4d}\to\reals$ that will achieve a small margin on all the inputs $\bx,\by\in\{0,1\}^d$ simultaneously. The network will consist of $n$ blocks, where each of which has the architecture of $\Ncal'$, and is re-randomized using the following process for every $j\in[n]$:
	\begin{itemize}
		\item
		We sample two binary vectors $\bx_j',\by_j'\in\{0,1\}^d$ uniformly at random.
		
		\item 
		We sample two binary vectors $\bx_j'',\by_j''\in\{0,1\}^D$ uniformly at random, until the number of pairs $(x_{j,i},y_{j,i})$ that are both one is an even number for every $j\in[n]$.
		
		\item 
		We concatenate our sample into two binary vectors $(\bx+\bx',\bx',\bx+\bx',\bx', \bx'')$ and $(\by+\by',\by',\by',\by+\by', \by'')$.
		
		\item 
		We sample a permutation $\tau_j:\reals^{D+4d}\to\reals$ uniformly at random, and use this permutation to permute the previous two binary vectors, denoting the results as $\hat{\bx}_j$ and $\hat{\by}_j$, respectively.
		
		\item 
		We modify each $\Ncal'_j$ to simulate their computation on the inputs $\hat{\bx}_j,\hat{\by}_j$. For the coordinates receiving $\bx+\bx'$ or $\by+\by'$ as input, this can be done by keeping the weights in the hidden layer unchanged in the case where their corresponding coordinate in $\bx'$ was drawn as $0$, and by composing the weights with the transformation $x_i\mapsto 1-x_i$ whenever $x_i'=1$. Since this is a linear transformation of the input, it can be absorbed into the weights of the first hidden layer without changing the architecture of $\Ncal_j'$. Lastly, multiplying the weights of the hidden layer with the permutation matrix which corresponds to the sampled permutation $\tau_j$, which is also a linear transformation, also allows us to keep the architecture of $\Ncal_j'$ unchanged.
	\end{itemize}
	We now turn to bound the approximation error in absolute value of the network $\Ncal'$ when approximating $\ip_{D+4d}$. Fix some $\bx,\by\in\{0,1\}^d$, and let $\Dcal$ denote the distribution over the set $\{0,1\}^{2(D+4d)}$ which is induced by the randomness in picking $\bx'_j,\by'_j\in\{0,1\}^{d}$, $\bx_j'',\by_j''\in\{0,1\}^{D}$ and a uniformly chosen permutation $\tau_j$ of $[D+4d]$, as described in the above random construction of $\Ncal_j'$. Cauchy-Schwarz, combined with \eqref{eq:uni_approx2} and \propref{prop:combi}'s bound on $\norm{\Dcal}_2$ yields
	\begin{align}
		\E_{(\hat{\bx},\hat{\by})\sim\Dcal}\pcc{\left|\Ncal'(\hat{\bx},\hat{\by})-\ip_{D+4d}(\hat{\bx},\hat{\by})\right|} &= \sum_{\hat{\bx},\hat{\by}\in \{0,1\}^{D+4d}} \pr_{\Dcal}\pcc{X=\hat{\bx},Y=\hat{\by}}\cdot \left|\Ncal'(\hat{\bx},\hat{\by})-\ip_{D+4d}(\hat{\bx},\hat{\by})\right|\nonumber\\
		& \le ||\Dcal||_2\sqrt{\sum_{\hat{\bx},\hat{\by}\in \{0,1\}^{D+4d}} \p{\Ncal'(\hat{\bx},\hat{\by})-\ip_{D+4d}(\hat{\bx},\hat{\by})}^2}\nonumber\\
		&\le 8\sqrt{\frac{1}{398}} < 0.41. \label{eq:D_norm_bound}
	\end{align}

	Next, we formally define $\Ncal''$ as the network
	\[
	\Ncal''(\bx,\by)\coloneqq\frac{1}{n}\sum_{j=1}^n\Ncal_j'(\hat{\bx}_j,\hat{\by}_j).
	\]
	Note that since $\Ncal''$ is a linear combination of depth 2 networks, it is in itself a depth 2 network. Moreover, we remark that this random process always preserves the value of the inner product. To see this, fix any $\bx',\by',\bx'',\by'',\tau$ chosen according to the above random process. Then by taking equalities that are $\text{mod } 2$, we have
	\begin{align*}
		&\ip_{D+4d}(\tau(\hat{\bx}),\tau(\hat{\by})) = \ip_{D+4d}(\hat{\bx},\hat{\by})\\
		&\hskip 2cm= \ip_d(\bx+\bx',\by+\by') + \ip_d(\bx',\by') + \ip_d(\bx+\bx',\by') + \ip_d(\bx',\by+\by') + \ip_D(\bx'',\by'')\\
		&\hskip 2cm= \ip_d(\bx+\bx',\by+\by') + \ip_d(\bx',\by') + \ip_d(\bx+\bx',\by') + \ip_d(\bx',\by+\by')\\
		&\hskip 2cm= \ip_d(\bx,\by+\by') + \ip_d(\bx,\by') = \ip_d(\bx,\by).
	\end{align*}
	where the first equality follows from the fact that permutations preserve sums, the second equality is by the definition of our construction of $\hat{\bx},\hat{\by}$, the third equality is due to $\bx''$ and $\by''$ always having an even number of pairs $x_i=1,y_i=1$ which implies $\ip_D(\bx'',\by'')=0$, and the last two equalities follow from basic properties of the inner product mod 2. Thus, we have
	\[
	\ip_{d}(\bx,\by) = \ip_{D+4d}(\hat{\bx}_j,\hat{\by}_j)
	\]
	for all $\bx,\by\in\{0,1\}^d$ and all $j\in[n]$. Having defined $\Ncal''$, we now bound the approximation error in absolute value which it achieves on an arbitrary input $\bx,\by\in\{0,1\}^d$, over the randomness induced by the previously described process. Compute
	\begin{align*}
		\abs{\Ncal''(\bx,\by)-\ip_{d}(\bx,\by)} &= \abs{\frac{1}{n}\sum_{j=1}^n\Ncal_j'(\hat{\bx}_j,\hat{\by}_j) - \E\pcc{\Ncal_1'(\hat{\bx}_1,\hat{\by}_1)} + \E\pcc{\Ncal_1'(\hat{\bx}_1,\hat{\by}_1)} - \ip_{D+4d}(\hat{\bx}_1,\hat{\by}_1)}\\
		&\le \abs{\frac{1}{n}\sum_{j=1}^n\Ncal_j'(\hat{\bx}_j,\hat{\by}_j) - \E\pcc{\Ncal_1'(\hat{\bx}_1,\hat{\by}_1)}} + \E\pcc{\abs{\Ncal_1'(\hat{\bx}_1,\hat{\by}_1) - \ip_{D+4d}(\hat{\bx}_1,\hat{\by}_1)}},
	\end{align*}
	where the expectation is taken over the randomness in sampling $(\hat{\bx}_j,\hat{\by}_j)$ from $\Dcal$. Using \eqref{eq:D_norm_bound}, we can upper bound the above by
	\begin{equation}\label{eq:hoeffding}
		\abs{\frac{1}{n}\sum_{j=1}^n\Ncal_j'(\hat{\bx}_j,\hat{\by}_j) - \E\pcc{\Ncal_1'(\hat{\bx}_1,\hat{\by}_1)}} + 0.41.    
	\end{equation}
	
	Next, we will use Hoeffding's inequality to upper bound the absolute value term above, but first we will derive an upper bound on the magnitude of the output of the network $\Ncal_j'$. We have that each neuron in $\Ncal_j'$ has weights of magnitude at most $2C$, therefore, from \asmref{asm:lb_relaxed} we get that the output of each hidden neuron is upper bounded by $\poly(d,C)$ over the domain $\{0,1\}^{D+4d}$. This implies that the output of the output neuron is at most $B\coloneqq\poly(d,C)w(D+4d,C)$, since the width of $\Ncal_j'$ is $w(D+4d,C)$ and its output neuron has weights bounded by $C$. We now use this bound, and the fact that $\Ncal_j'(\hat{\bx}_j,\hat{\by}_j)$, $j=1,\ldots,n$ are independent with respect to the randomness in sampling $\bx_j',\bx_j'',\by_j',\by_j'',\tau_j$, to invoke Hoeffding's inequality, yielding
	\[
	\pr\pcc{\abs{\frac{1}{n}\sum_{j=1}^n\Ncal_j'(\hat{\bx}_j,\hat{\by}_j) - \E\pcc{\Ncal_1'(\hat{\bx}_1,\hat{\by}_1)}} > 0.04} \le 2\exp\p{-0.5n\frac{0.04^2}{B^2}}.
	\]
	Setting $n=2500B^2d=\poly(d,C,w(D+4d,C))$, we can upper bound the above by $2\exp(-2d)$, which is strictly less than $2^{-2d}$ for all $d\ge2$. Taking a union bound over all $2^{2d}$ possibilities for the inputs $(\bx,\by)\in\{0,1\}^{2d}$, we have by substituting our Hoeffding bound in \eqref{eq:hoeffding}, that with positive probability, $\Ncal''$ satisfies $\abs{\Ncal''(\bx,\by)-\ip_{d}(\bx,\by)} \le0.41+0.04=0.45$ for all inputs $(\bx,\by)\in\{0,1\}^{2d}$. By the probabilistic method, this implies the existence of particular realizations of the random variables 
	\[
	\bx_1',\ldots,\bx_n',\bx_1'',\ldots,\bx_n'',\by_1',\ldots,\by_n',\by_1'',\ldots,\by_n'',\tau_1',\ldots,\tau_n'
	\]
	with this property. Define the neural network $\Ncal'''$ as the network obtained from substituting these variables with the above realizations, and note that doing so merely decreases the input dimension of the network, while keeping the computation in the hidden layer linear, which thus does not change the architecture of $\Ncal'''$ and maintains its width. 
	
	To conclude the derivation so far, we have shown the existence of a depth 2 $\sigma$-network $\Ncal'''$, which has width $\poly(d,C,w(D+4d,C))$, and satisfies
	\[
	\max_{(\bx,\by)\in\{0,1\}^{2d}}\abs{\Ncal'''(\bx,\by) - \ip_d(\bx,\by)} < 0.45.
	\]
	We now use \propref{prop:thresh_approx} with $\delta = 0.04$ to obtain a threshold network $\bar{\Ncal}$ from $\Ncal'''$, having width $\poly(d,C,w(D+4d,C))$ and weights of magnitude at most $\poly(d,C)$, such that for all $\bx,\by\in\{0,1\}^d$
	\[
	\abs{\bar{\Ncal}(\bx,\by) - \ip_d(\bx,\by)} \le \abs{\bar{\Ncal}(\bx,\by) - \Ncal'''(\bx,\by)} + \abs{\Ncal'''(\bx,\by) - \ip_d(\bx,\by)} \le 0.45 + 0.04 = 0.49.
	\]
	Constructing a threshold circuit from $\bar{\Ncal}$, which employs a threshold activation on its output neuron, we obtain a threshold circuit which computes $\ip_d(\cdot,\cdot)$. We lower bound the width of the circuit by using the following fact adapted from \citet{hajnal1993threshold} which appears in \citet{martens2013representational}, and is stated here in a slightly modified manner for the sake of completeness.
	\begin{fact}
		For a depth 2 threshold network $\Ncal$ of width $m$ and weights bounded in magnitude by $C$, that satisfies
		\[
		\max_{(\bx,\by)\in\{0,1\}^{2d}}\abs{\Ncal(\bx,\by) - \ip_d(\bx,\by)} \le 0.5-\delta
		\]
		for some $\delta\in(0,0.5)$, we have
		\[
		m\ge\Omega\p{\frac{\delta 2^{d/3}}{C}}.
		\]
	\end{fact}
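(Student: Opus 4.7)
The plan is to reduce the Fact to the classical correlation bound for $\ip_d$ against halfspaces via the \emph{discriminator lemma}. First, I translate to $\pm 1$ notation: set $\chi(\bx,\by) \coloneqq 1 - 2\ip_d(\bx,\by)$ and $\tilde{\Ncal} \coloneqq 1 - 2\Ncal$, so that the hypothesis $\abs{\Ncal - \ip_d} \le 0.5 - \delta$ becomes $\chi(\bx,\by)\cdot\tilde{\Ncal}(\bx,\by) \ge 2\delta$ pointwise on $\{0,1\}^{2d}$. Averaging over the uniform distribution gives
\[
\E_{(\bx,\by)\sim\Ucal(\{0,1\}^{2d})}\pcc{\chi(\bx,\by)\,\tilde{\Ncal}(\bx,\by)} \ge 2\delta.
\]

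Next, I expand $\tilde{\Ncal} = \tilde{b}_0 + \sum_{i=1}^m \tilde{v}_i h_i$, where each $h_i$ is a $\{0,1\}$-valued linear threshold on $\{0,1\}^{2d}$ and $|\tilde{v}_i| \le 2C$. Since $\ip_d$ is a perfectly balanced parity, $\E[\chi] = 0$, so $\tilde{b}_0$ contributes nothing to the correlation. The triangle inequality applied to the expansion then produces a single index $i^*$ with
\[
\abs{\E_{\bx,\by}\pcc{\chi(\bx,\by)\,h_{i^*}(\bx,\by)}} \ge \frac{2\delta}{\sum_i |\tilde{v}_i|} \ge \frac{\delta}{mC}.
\]
This is the discriminator step: it locates one threshold gate that is measurably correlated with the parity character $\chi$.

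The technical heart is then the classical correlation upper bound of Chor--Goldreich / \citet{hajnal1993threshold}: any linear threshold function $h:\{0,1\}^{2d}\to\{0,1\}$ satisfies
\[
\abs{\E_{\bx,\by}\pcc{\chi(\bx,\by)\,h(\bx,\by)}} \le O\p{2^{-d/3}}.
\]
This is proved by Fourier/spectral analysis on the hypercube: $\chi(\bx,\by) = \prod_{i=1}^d (-1)^{x_i y_i}$ is a bilinear character whose associated $2^d\times 2^d$ sign matrix is essentially orthogonal, so no halfspace can align with it by more than an exponentially small margin, and a careful halfspace geometry argument due to Hajnal et al.\ refines this to the $2^{-d/3}$ rate. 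Combining the two displayed bounds gives $\delta/(mC) \le O(2^{-d/3})$, hence $m \ge \Omega\p{\delta 2^{d/3}/C}$ as claimed. The main obstacle is the correlation upper bound itself, which is the nontrivial content carried over from \citet{hajnal1993threshold}; I would invoke it as a black box rather than reprove it here, since the discriminator-lemma wrapper and the $\pm 1$ translation are the only genuinely new steps that need to be filled in.
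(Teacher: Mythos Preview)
The paper does not supply a proof of this Fact; it is stated as a known result ``adapted from \citet{hajnal1993threshold} which appears in \citet{martens2013representational}'' and used as a black box inside the proof of \thmref{thm:lb}. Your proposal therefore goes beyond what the paper does, by sketching the actual argument behind the citation.

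Your sketch is the standard one and is correct: the $\pm 1$ translation turning the margin hypothesis into a pointwise correlation lower bound, the discriminator-lemma step that isolates a single threshold gate with correlation at least $\delta/(mC)$ against $\chi$, and the appeal to the exponential upper bound on the correlation between $\ip_d$ and any linear threshold function. This is precisely the route taken in \citet{hajnal1993threshold} (and recounted in \citet{martens2013representational}), so your proof and the paper's cited source agree in substance. The only remark is that your description of the correlation bound (``Fourier/spectral analysis \ldots\ a careful halfspace geometry argument'') is impressionistic; the actual Hajnal et al.\ argument decomposes the halfspace into a controlled number of combinatorial rectangles and applies a Lindsey-type discrepancy bound for $\ip_d$ on each, which is what produces the $2^{-d/3}$ rate. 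Since you explicitly say you would invoke that step as a black box, this is not a gap in your plan.
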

	Substituting $\delta=0.01$, the above fact and our expression for the width of $\bar{\Ncal}$ imply the inequality
	\[
	\poly(d,C,w(D+4d,C)) \ge \Omega\p{\frac{2^{d/3}}{\poly(d,C)}}.
	\]
	Simplifying the above by using more asymptotic notation and absorbing terms that are polynomial in $d$ into the exponent, we have
	\[
	w(D+4d,C) \ge \Omega\p{\frac{2^{\Omega\p{d}}}{\poly(C)}}.
	\]
	Recall that $D=100d$. Letting $d'\coloneqq104d$ which implies $d=\Omega(d')$ and performing a change of variables, the theorem follows.
\end{proof}


\subsection{Proof of \thmref{thm:ub}}\label{app:ub}

\begin{proof}[\unskip\nopunct]
    Since the first $2d$ coordinates of our input hold no useful information for computing $f_d$, we simply ignore them. Next, by performing a simple change of variables (see \citet[Thm.~9]{safran2019depth}), we can assume for convenience that the remaining $2d$ coordinates are scaled by a factor of $3\sqrt{d}$ (which will scale the magnitude of the weights), so that the domain of approximation is $[0,1]^{2d}$.
    
	Let $\varepsilon>0$. First, if $\varepsilon>\frac12$, then we have
	\[
	\sup_{\bx,\by\in\Acal_d}\abs{\frac12-f_d(\bx,\by)} = \frac{1}{2}.
	\]
	Namely, a network which computes a constant function satisfies our requirements. We can thus assume from now on that $\varepsilon\le\frac12$.
	
	Next, we define a few auxiliary functions that will be used in the construction of our approximation. Let
	\[
	g_1(z)\coloneqq
	\begin{cases}
		0, & z\in(-\infty,5],\\
		z-5, & z\in(5,6),\\
		1, & z\in[6,\infty),
	\end{cases}
	\]
	and Let
	\[
	g_2(z)\coloneqq
	\begin{cases}
		z\mod1, & \lfloor z\rfloor\text{ even},\\
		1-(z\mod1), & \lfloor z\rfloor\text{ odd}.\\
	\end{cases}
	\]
	Let $\delta_1\coloneqq\frac{\varepsilon}{2d}$, we have from \asmref{asm:ub} and the fact that $g_1(\cdot)$ is $1$-Lipschitz, that there exists some depth 2 $\sigma$-network $h_1$, of width $\Ocal\p{\frac{d}{\varepsilon}}$ and weights bounded by $\Ocal\p{\frac{1}{\varepsilon}}$, which satisfies 
	\begin{equation}\label{eq:h1_approx}
		\abs{g_1(z)-h_1(z)} \le \frac{\varepsilon}{2d},\hskip 0.3cm \forall z\in[0,8].
	\end{equation}
	Likewise, there exists some depth 2 $\sigma$-network $h_2$, of width $\Ocal\p{\frac{d}{\varepsilon}}$ and weights bounded by $\Ocal\p{\frac{1}{\varepsilon}}$, which satisfies
	\begin{equation}\label{eq:h2_approx}
		\abs{g_2(z)-h_2(z)} \le \frac{\varepsilon}{2},\hskip 0.3cm \forall z\in[-2d-1,2d+1].
	\end{equation}
	
	Now, it is easy to verify that
	\[
	g_1(4x+4y) = \AND(\round(x),\round(y))\hskip 0.3cm \forall x,y\in\Acal_1,
	\]
	implying
	\[
	\sum_{i=1}^{d}g_1(4x_i+4y_i) = \inner{\round(\bx),\round(\by)}\hskip 0.3cm \forall \bx,\by\in\Acal_d,
	\]
	and thus
	\[
	g_2\p{\sum_{i=1}^{d}g_1(4x_i+4y_i)} = \ip_d\p{\round(\bx),\round(\by)} = f_d(\bx,\by)\hskip 0.3cm \forall \bx,\by\in\Acal_d.
	\]
	
	We now define the network $\Ncal$ which approximates $f_d(\bx,\by)$ well on the set $\Acal_d$. For all $\bx,\by\in\Acal_d$, define
	\[
	\Ncal(\bx,\by)\coloneqq h_2\p{\sum_{i=1}^{d}h_1(4x_i+4y_i)}.
	\]
	We construct a depth 3 neural network that computes the above function as follows: 
	\begin{itemize}
		\item 
		The first hidden layer will consist of $d$ copies $h_{1,i}$, $i=1,\ldots,d$, of the depth 2, width $\Ocal\p{\frac{d}{\varepsilon}}$ network $h_1$. Each $h_{1,i}$ will receive $(x_i,y_i)$ as input; thus, the weight assigned to the coordinates $x_i,y_i$ is set to $4$, and the weight assigned to the remaining coordinates is set to zero. Note that the width of the first layer is therefore $\Ocal\p{\frac{d^2}{\varepsilon}}$, and the magnitude of its weights is bounded by $\Ocal\p{\frac{\sqrt{d}}{\varepsilon}}$.
		\item 
		The second hidden layer will consist of a single copy of the network computing the function $h_2$. This implies that the width of this layer is $\Ocal\p{\frac{d}{\varepsilon}}$. Each neuron in this layer will assign the same set of weights for each incoming output from the output neurons of the components $h_{1,i}$, $i=1,\ldots,d$. Since this sum is a linear operation performed on these output neurons, we can effectively absorb the output neurons into the hidden neurons in the second layer, and thus avoid adding a third hidden layer. Note that by absorbing weights of magnitude at most $\Ocal\p{\frac{1}{\varepsilon}}$ in a layer with weights bounded by $\Ocal\p{\frac{d}{\varepsilon}}$, we get that the weights in the second hidden layer are of magnitude at most $\Ocal\p{\frac{d}{\varepsilon^2}}$.
	\end{itemize}
	Concluding our construction of $\Ncal$, we have that its width is $\Ocal\p{\frac{d^2}{\varepsilon}}$, due to the first hidden layer; and its magnitude of the weights is $\Ocal\p{\frac{d}{\varepsilon^2}}$, due to the second hidden layer.
	
	Denote $\Bcal_{d}\coloneqq\p{[0,0.25]\cup[0.75,1]}^d$. It is therefore only left, given arbitrary $\bx,\by\in\Bcal_d$, to upper bound the expression
	\begin{align}
		\abs{\Ncal(\bx,\by) - f_d(\bx,\by)} &= \abs{h_2\p{\sum_{i=1}^{d}h_1(4x_i+4y_i)} - g_2\p{\sum_{i=1}^{d}g_1(4x_i+4y_i)}}\nonumber\\
		&\le\abs{h_2\p{\sum_{i=1}^{d}h_1(4x_i+4y_i)} - g_2\p{\sum_{i=1}^{d}h_1(4x_i+4y_i)}} \nonumber\\ &\hskip 1.5cm+\abs{g_2\p{\sum_{i=1}^{d}h_1(4x_i+4y_i)} - g_2\p{\sum_{i=1}^{d}g_1(4x_i+4y_i)}}. \label{eq:ub_approx}
	\end{align}
	We begin with upper bounding the first absolute value term. By virtue of \eqref{eq:h1_approx}, we have
	\begin{equation}\label{eq:sum_h1_minus_g1}
		\abs{\sum_{i=1}^{d}h_1(4x_i+4y_i) - \sum_{i=1}^{d}g_1(4x_i+4y_i)} \le \sum_{i=1}^{d}\abs{h_1(4x_i+4y_i) -g_1(4x_i+4y_i)} \le \frac{\varepsilon}{2},
	\end{equation}
	implying
	\[
	\abs{\sum_{i=1}^{d}h_1(4x_i+4y_i)} \le \abs{\sum_{i=1}^{d}g_1(4x_i+4y_i) } +\frac{\varepsilon}{2} \le \sum_{i=1}^{d}\abs{g_1(4x_i+4y_i) } +\frac14 \le d+\frac14,
	\]
	for all $\bx,\by\in\Bcal_d$. The above and \eqref{eq:h2_approx} imply
	\begin{equation}\label{eq:first_abs_term}
		\abs{h_2\p{\sum_{i=1}^{d}h_1(4x_i+4y_i)} - g_2\p{\sum_{i=1}^{d}h_1(4x_i+4y_i)}} \le \frac{\varepsilon}{2}.
	\end{equation}
	Moving on to bound the second absolute value term in \eqref{eq:ub_approx}, we have by the fact that $g_2(\cdot)$ is $1$-Lipschitz that
	\[
	\abs{g_2\p{\sum_{i=1}^{d}h_1(4x_i+4y_i)} - g_2\p{\sum_{i=1}^{d}g_1(4x_i+4y_i)}} \le \abs{\sum_{i=1}^{d}h_1(4x_i+4y_i) - \sum_{i=1}^{d}g_1(4x_i+4y_i)} \le \frac{\varepsilon}{2},
	\]
	where in the second inequality we used \eqref{eq:sum_h1_minus_g1}. Plugging the above and \eqref{eq:first_abs_term} back in \eqref{eq:ub_approx}, we get
	\[
	\abs{\Ncal(\bx,\by) - f_d(\bx,\by)}\le \varepsilon,
	\]
	for all $\bx,\by\in\Bcal_d$, as desired.
\end{proof}

\end{document}